\documentclass{svproc}

\usepackage{multicol}
\usepackage{amsmath}
\usepackage{amsfonts}
\usepackage{amssymb}  
\usepackage{algorithm}
\usepackage{algpseudocode}
\usepackage{graphics} 
\usepackage{epsfig} 
\usepackage{wrapfig}
\usepackage[font={small}]{caption}
\usepackage{subcaption}

\captionsetup{compatibility=false}

\usepackage{relsize}
\usepackage[numbers, sort, sectionbib]{natbib}
\bibliographystyle{unsrtnat}
\usepackage[hyphens,spaces,obeyspaces]{url}


\begin{document}

\frontmatter          						
\pagestyle{headings}  						
\mainmatter              					
\title{Active Area Coverage from Equilibrium}
\author{Ian Abraham \and  Ahalya Prabhakar \and Todd D. Murphey
    \thanks{ \smaller
        This material is based upon work supported by the National Science Foundation under Grants CNS 1837515.
        Any opinions, findings and conclusions or recommendations expressed in this material are those of the authors and
        do not necessarily reflect the views of the aforementioned institutions.
        }
}
\authorrunning{Abraham et al.}				
\institute{Department of Mechanical Engineering, Northwestern University, \\2145 Sheridan Road, Evanston, IL 60208, USA\\
\email{i-abr@u.northwestern.edu}, \email{a-prabhakar@u.northwestern.edu}, \email{t-murphey@northwestern.edu}}
\maketitle              					

\vspace{-10mm}

\begin{abstract}
    This paper develops a method for robots to integrate stability into actively seeking out informative measurements through coverage.
    We derive a controller using hybrid systems theory that allows us to consider safe equilibrium policies
    during active data collection.
    We show that our method is able to maintain Lyapunov attractiveness while still actively seeking out data.
    Using incremental sparse Gaussian processes, we define distributions which allow a robot to actively seek out
    informative measurements.
    We illustrate our methods for shape estimation using a cart double pendulum, dynamic model learning of a hovering quadrotor,
    and generating galloping gaits starting from stationary equilibrium by learning a dynamics model for the half-cheetah
    system from the Roboschool environment.
\keywords{active exploration, safe learning, active learning}
\end{abstract}


\vspace{-6mm}

\section{Introduction}
    \vspace{-2mm}
    Robot learning has proven to be a challenge in real-world application.
    This is partially due to the ineffectiveness of passive data acquisition for learning and a necessity for action in
    order to generate informative data.
    What makes this problem even more challenging is that active data gathering is not a stable process.
    It involves exciting states in order to acquire new information.
    Safe exploration then becomes a challenge for modern day robotics.
    The problem becomes exacerbated when memory and task constraints (i.e., actively collecting data after deployment)
    are imposed on the robot.
    If the structures that compose the dynamics of the robot change over time, the robot will need to explore its own dynamics
    in a manner that is systematic and informative, avoiding damage to the underlying structures (and humans) in the environment.
    In this paper, we address these fundamental issues by developing an algorithm that is inspired by hybrid systems theory.
    This algorithm enables robots to actively pursue informative data by generating area coverage while guaranteeing
    Lyapunov attractiveness during exploration.

    Active data acquisition and learning are often considered part of the same problem of learning
    from experience~\cite{kormushev_robotmotorskills_em_rl, reinhart_AuRo_skill_babble}.
    This is generally seen in the field of reinforcement learning (RL) where attempts at a task, as well as learning from
    the outcome of actions, are used to both learn policies and predictive
    models~\cite{kormushev_robotmotorskills_em_rl, mckinnon_multimodal_gp_learning_online}.
    As a result, generalizing these methods to real-world application has been a topic of
    research~\cite{mckinnon_multimodal_gp_learning_online, kormushev_robotmotorskills_em_rl, tan_RSS_sim_to_real} where
    data-inefficiency dominates much of the progress.
    A solution to the problem of data-inefficiency is to simulate robots in a realistic virtual environment and subsequently
    use the large amount of synthetic data to solve a learning problem before applying the results on a real
    robot~\cite{marco2017virtual}.
    This leads to issues such as the ``reality-gap'' where finer modelling details such as motor delays lead to poor quality
    data for learning.

    Existing work addresses the data-inefficiency problem by actively seeking out informative data using
    information maximization~\cite{schwager_robotics_inf_gather} or by pruning a data-set based on some information
    measure~\cite{ nguyen_NEURO_incremental_sparse_gp}.
    These methods still suffer from the problem of local minima due to a lack of exploration or non-convex
    information objectives~\cite{ucinski_CRC_optimal_meas}.
    Safety in the task is also a concern when actively seeking out informative measurements.
    Methods typically provide some bound on the worst outcome model using probabilistic approaches~\cite{berkenkamp2017safe},
    but often only consider the safety with respect to the task and not with respect to the data collection process.
    We focus on problems where data collection involves exploring the state-space of robots where safe generation of
    informative data is important.
    In treating data acquisition as a dynamic area coverage problem\textemdash where the time spent during the trajectory of
    the robot is proportional to regions where there is an expectation of informative data\textemdash we are able to uncover
    more informative data that is not already expected.
    With this approach, we can provide attractiveness guarantees\textemdash that the robot will eventually return to a
    stable state\textemdash while providing control authority that allows the robot to actively seek out informative data in
    order to later solve a learning task.
    \textit{Thus, our contribution is an approach to dynamic area coverage for active data collection that starts
    from equilibrium policies for robots.}

    We structure the paper as follows: Section~\ref{sec-related-work} provides a list of related work,
    Section~\ref{sec-problem-statement} defines the problem statement for this work.
    Section~\ref{sec-algorithm} formulates the algorithm for active data acquisition from equilibrium.
    Section~\ref{sec-results} provides simulated and experimental examples of our method.
    Last, Section~\ref{sec-conclusion} provides concluding remarks on our method and future directions.

    \vspace{-4mm}

\section{Related Work} \label{sec-related-work}
    \vspace{-2mm}
    Existing work generally formulates problems of active data acquisition as information maximizing with respect to a
    known parameterized model~\cite{lin2017direct, bourgault2002information}.
    The problem with this approach is that robots need to address local
    optima~\cite{miller2016ergodic, bourgault2002information}, resulting in insufficient data collection.
    Other approaches have sought to solve this problem by thinking of information maximization as an area
    coverage problem~\cite{miller2016ergodic, ayvali2017ergodic}.
    Ergodic exploration, in particular, has remedied the issue of local extrema by using the ergodic metric to minimize the
    Sobelov distance~\cite{ arnold1992sobolev} from the time-averaged statistics of the robot's trajectory to the expected
    information in the explored region.
    This enables both exploration (quickly in low information regions) and exploitation (spending significant amount of time
    in highly informative regions) in order to avoid local extrema and collect informative measurements.
    The major downside is that this method assumes that the model of the robot is fully known.
    Moreover, there is little guarantee that the robot will not destabilize during the exploration process.
    This becomes an issue when the robot must explore part of its own state-space (i.e., velocity space) in order to
    generate informative data.
    To the authors' best knowledge this has not been done to this date.
    Another issue is that these methods do not scale well with the dimensionality of the search space, making experimental
    applications with this approach challenging due to computational limitations.

    Our approach overcomes these issues by using a sample-based KL-divergence measure~\cite{ayvali2017ergodic} as a replacement
    for the ergodic metric.
    This form of measure has been used previously; however, it relied on motion primitives in order to compute
    control actions~\cite{ayvali2017ergodic}.
    We avoid this issue by using hybrid systems theory in order to compute a controller that sufficiently reduces the
    KL-divergence measure from an equilibrium stable policy.
    As a result, we can use approximate models of dynamical systems instead of complete dynamic reconstructions in order to
    actively collect data while ensuring safety in the exploration process through a notion of attractiveness.

    The following section formulates the problem statement that our method solves.

    \vspace{-5mm}
\section{Problem Statement} \label{sec-problem-statement}

    \subsubsection*{Modeling Assumptions and Stable Policies}
        Assume we have a robot whose approximate dynamics can be modeled using
        \begin{equation}\label{eq:dynamics}
            \dot{x} = f(x,u) = g(x) + h(x) u
        \end{equation}
        where $x \in \mathbb{R}^n$ is the state of the robot,
        $u \in \mathbb{R}^m$ is a control vector applied to the robot,
        $g(x) : \mathbb{R}^n \to \mathbb{R}^n$ is the free unactuated dynamics,
        $h(x) : \mathbb{R}^n \to \mathbb{R}^{n \times m}$ is the actuated dynamics,
        and $\dot{x}$ is the time rate of change of the robot at state $x$ subject to the control $u$.
        Moreover, let us assume that there exists a Lyapunov function $V(x)$ such that under a
        policy $\mu(x) : \mathbb{R}^n \to \mathbb{R}^m$, $\dot{V}(x) < 0$ $\forall x \in \mathcal{B}$,
        where $\mathcal{B} = \{ x\in \mathbb{R}^n | \Vert x \Vert < r \}$ for $r > 0$.
        For the rest of the paper, we will refer to $\mu(x)$ as an equilibrium policy.

    \vspace{-4mm}
    \subsubsection*{KL-divergence and Area Coverage}
        Given the assumptions of known approximate dynamics and the equilibrium policy, we can define active exploration
        for informative data acquisition as automating safe switching between $\mu(x)$ and some control authority $\mu_\star(t)$
        that generates actions that actively seek out informative data.
        This is accomplished by specifying the active data acquisition task using an area coverage objective where we minimize
        the KL-divergence between the time average statistics of the robot along a trajectory and a spatial distribution
        defining the current coverage requirement.
        We can then define an approximation to the spatial statistics of the robot as follows:

        \begin{definition}
            Given a search domain $\mathcal{X}^v\subset\mathbb{R}^{n+m}$ where $v \le n + m$, the $\Sigma$-approximated
            time-averaged statistics of the robot, i.e., the time the robot spends in regions of the search domain
            $\mathcal{X}^v$, is defined by
            \begin{equation}\label{eq:time-averaged-stats}
                q(s \mid x(t), \mu(t)) = \frac{\eta}{T_r} \int_{t_i - t_\text{r}}^{t_i+T}
                    \exp \left[
                        -\frac{1}{2} \left(s - x_v(t) \right)^\top \Sigma^{-1} \left(s - x_v(t) \right)
                        \right]dt
            \end{equation}
            where $s \in \mathcal{X}^v \subset \mathbb{R}^{n+m}$ is a point in the search domain $\mathcal{X}^v$,  $x_v(t)$
            is the component of the robot's trajectory $x(t)$ and actions $\mu(t)$ that intersects the search domain
            $\mathcal{X}^v$,
            $\Sigma \in \mathbb{R}^{v \times v}$ is a positive definite matrix parameter that specifies the width of the Gaussian,
            $\eta$ is a normalization constant such that $q(s) > 0$ and $\int_{\mathcal{X}^v}q(s)ds = 1$,
            $t_i$ is the $i^\text{th}$ sampling time,
            and $T_r = T + t_r$ is sum of the time horizon $T$ and amount of time $t_r$ the robot remembers $x_v(t)$
            into the past.
        \end{definition}

        This is an approximation because the true time-averaged statistics, as described in~\cite{miller2016ergodic},
        is a collection of delta functions parameterized by time.
        We approximate the delta function as a Gaussian distribution with covariance $\Sigma$, converging as
        $\Vert \Sigma \Vert \to 0$.
        Using this approximation, we are able to relax the ergodic area-coverage objective in~\cite{miller2016ergodic} and use
        the following KL-divergence objective~\cite{ayvali2017ergodic}:
        \begin{equation*}
            \footnotesize
            D_\text{KL}(p \Vert q) = \int_{\mathcal{X}^v} p(s) \ln \frac{p(s)}{q(s)} ds = \mathbb{E}_{p(s)} \left[ \ln p(s) - \ln q(s) \right],
        \end{equation*}
        where $\mathbb{E}$ is the expectation operator,
        $q(s) = q(s \mid x(t), \mu(t))$,
        and $p(s)$, $p(s) >0, \int_{\mathcal{X}^v}p(s)ds=1$, is a distribution that describes where in the search domain an
        informative measurement is likely to be acquired.
        We can further approximate the KL-divergence via sampling where we approximate the expectation operator as
        \begin{equation}\label{eq:kl-objective}
            D_\text{KL} (p \Vert q) =  \mathbb{E}_{p(s)} \left[ \ln p(s) - \ln q(s) \right] \approx \sum_{i=1}^N p(s_i) \ln p(s_i) - p(s_i) \ln q(s_i),
        \end{equation}
        where $N$ is the number of samples in the search domain drawn from a uniform distribution.
        With this formulation, we can approximate the ergodic coverage metric using (\ref{eq:kl-objective}).

        In addition to the KL-divergence, we can add a task objective
        \begin{equation} \label{eq:task}
            J_\text{task} = \int_{t_i}^{t_i + T} \ell( x(t), \mu(x(t)) ) dt + m(x(t_i + T))
        \end{equation}
        where $t_i$ is the $i^\text{th}$ sampling time,
        $T$ is the time horizon,
        $\ell(x,u) : \mathbb{R}^n \times \mathbb{R}^m \to \mathbb{R}$ is the running cost,
        $m(x) : \mathbb{R}^n \to \mathbb{R}$ is the terminal cost,
        and $x(t) : \mathbb{R} \to \mathbb{R}^n$ is the state of the robot at time $t$.
        This additional objective will typically encode some other task, in addition to the KL-divergence objective.

        By summing the KL-divergence objective and a task objective (\ref{eq:task}), we can then pose active data acquisition
        as an optimal control problem subject to the initial approximate dynamic model of the robot.
        More formally, the objective is written as
        \begin{equation}\label{eq:objective}
            J = D_\text{KL}(p \Vert q) + \int_{t_i}^{t_i + T} \ell(x(t), \mu(x(t))) dt + m(x(t_i + T))
        \end{equation}
        where the goal is to generate a control $\mu_\star(t)$ that minimizes (\ref{eq:objective}) subject to the approximate
        dynamics (\ref{eq:dynamics}).
        Because we are including the equilibrium policy in the objective, we are able to synthesize controllers that take
        into account the equilibrium policy $\mu(x)$ and the desire to actively seek out measurements.

        For the rest of the paper, we assume the following:
        \begin{itemize}
        \item We have an initial approximate model $\dot{x} = f(x,u)$ of the robot.
        \item We also have an initial policy $\mu(x)$ that maintains the robot at equilibrium, for which there is a Lyapunov function.
        \end{itemize}
        These two assumptions are reasonable in that often robots are designed around stable states and typically
        have locally stable policies.

        The following section uses the fact that we have an initial policy $\mu(x)$ in order to synthesize control vectors
        $\mu_\star(t) : \mathbb{R} \to \mathbb{R}^m$ that reduce (\ref{eq:objective}).
        Specifically, we want to generate a hybrid composition of control actions that enable active data collection and
        actions that stabilize the robotic system.
        That way, it is possible to quantify how much the robot is deviating from a stable equilibrium.
        Thus, we motivate using hybrid systems theory in order to consider how much the objective (\ref{eq:objective})
        changes from switching from the equilibrium policy $\mu(x(t))$ to the control $\mu_\star(t)$.
        By quantifying the change, we specify an unconstrained optimization which solves for a control $\mu_\star$ that
        applies actions that retain Lyapunov attractiveness.

\vspace{-4mm}
\section{Algorithm} \label{sec-algorithm}
    \vspace{-2mm}
    Our algorithm starts by considering the objective defined in (\ref{eq:objective}) subject to the approximate dynamic
    constraints (\ref{eq:dynamics}) and policy $\mu(x)$.
    We want to quantify how sensitive the objective is to switching from policy $\mu(x(t))$ to the control vector
    $\mu_\star(t)$ for time $\tau \in \left[ t_i , t_i + T \right]$ for a infinitesimally small time duration $\lambda$.
    This sensitivity will be a function of $\mu_\star$ and inform us of the most influential time to apply $\mu_\star(t)$.
    Thus, we can use the sensitivity to write an objective whose minimizer is the schedule of control vectors $\mu_\star(t)$
    that reduces the objective (\ref{eq:objective}).
    \begin{proposition}
        The sensitivity of the objective (\ref{eq:objective}) with respect to the duration time $\lambda$, of switching from the
        policy $\mu(x)$ to the control $\mu_\star(t)$ at time $\tau$ is
        \begin{equation}\label{eq:mode-insertion}
            \frac{\partial J}{\partial \lambda} \Big |_{t= \tau} = \rho(\tau)^\top (f_2 - f_1)
        \end{equation}
        where $f_2 = f(x(t), \mu_\star(t))$ and $f_1 = f(x(t), \mu(x(t))$, and $\rho(t) \in \mathbb{R}^n$ is the adjoint,
        or co-state variable which is the solution of the following differential equation
        \begin{equation}
            \footnotesize
            \dot{\rho} =
                - \left(
                \frac{\partial \ell}{\partial x} + \frac{\partial \mu}{\partial x}^\top \frac{\partial \ell}{\partial u}
                -\frac{\eta}{T_r}\sum_i \frac{p(s_i)}{q(s_i)} \left(\frac{\partial g}{\partial x}
                + \frac{\partial\mu}{ \partial x}^\top\frac{\partial g}{\partial u}\right)
                \right) \\
                 - \left(
                \frac{\partial f}{\partial x} + \frac{\partial f}{\partial u} \frac{\partial \mu}{\partial x}
             \right)^\top \rho
        \end{equation}
        subject to the terminal constraint $\rho(t_i + T) = \frac{\partial}{\partial x} m(x(t_i + T))$.
    \end{proposition}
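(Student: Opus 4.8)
The plan is to recast $J$ in (\ref{eq:objective}) as a Bolza-type optimal-control cost for an \emph{augmented} system and then run the standard needle-variation (mode-insertion) argument from hybrid systems theory. The only non-routine feature is the coverage term $D_\text{KL}(p\Vert q)$: through the sampled form (\ref{eq:kl-objective}) its trajectory-dependent part is $-\sum_i p(s_i)\ln q(s_i)$, and by (\ref{eq:time-averaged-stats}) each $q(s_i)$ is a \emph{nonlinear function of an integral along the trajectory}, not a pointwise running or terminal cost. I would remove this obstruction by introducing auxiliary states $z_i(t)=\tfrac{\eta}{T_r}\int_{t_i-t_r}^{t}g(s_i,x_v(t'))\,dt'$, where $g(s,x_v)=\exp[-\tfrac12(s-x_v)^\top\Sigma^{-1}(s-x_v)]$ is the Gaussian kernel of (\ref{eq:time-averaged-stats}) (this is the $g$ that appears in the stated adjoint equation). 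Then $\dot z_i=\tfrac{\eta}{T_r}g(s_i,x_v)$, $q(s_i)=z_i(t_i+T)$, and $J$ becomes an honest Bolza cost on the augmented state $\xi=(x,z_1,\dots,z_N)$ with running cost $\ell(x,\mu(x))$ and terminal cost $m(x(t_i{+}T))+\sum_i p(s_i)\ln p(s_i)-\sum_i p(s_i)\ln z_i(t_i{+}T)$. Since $x_v$ is the projection of $(x,\mu(x))$ onto $\mathcal{X}^v$, the chain rule gives $\partial_x[g(s_i,x_v)]=\tfrac{\partial g}{\partial x}+\tfrac{\partial\mu}{\partial x}^\top\tfrac{\partial g}{\partial u}$, which is where the cross term in the adjoint will originate.

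Next I would carry out the needle variation: fix $\tau\in[t_i,t_i+T]$ and small $\lambda>0$ and compare the nominal closed-loop trajectory to the one that applies $\mu_\star(t)$ on $[\tau,\tau+\lambda)$ and $\mu(x(t))$ elsewhere. Because the cost is always evaluated at the policy $\mu(x)$ and the $z_i$-dynamics feed back into neither the $x$- nor the $z$-equations, the switch incurs no first-order \emph{direct} cost, $\delta z_i(\tau+\lambda)=o(\lambda)$, and the only first-order effect is the state jump $\delta x(\tau+\lambda)=\lambda\,(f_2-f_1)+o(\lambda)$ with $f_2=f(x(\tau),\mu_\star(\tau))$, $f_1=f(x(\tau),\mu(x(\tau)))$; thereafter $(\delta x,\delta z)$ propagates by the closed-loop linearization $\dot{\delta x}=A\,\delta x$, $A=\tfrac{\partial f}{\partial x}+\tfrac{\partial f}{\partial u}\tfrac{\partial\mu}{\partial x}$, with $\delta z_i(t_f)=\int_\tau^{t_f}\tfrac{\eta}{T_r}\big(\tfrac{\partial g}{\partial x}+\tfrac{\partial\mu}{\partial x}^\top\tfrac{\partial g}{\partial u}\big)^\top\delta x\,dt+o(\lambda)$, $t_f=t_i+T$. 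Collecting terms, the first-order change of the cost is the linear functional $\delta J=\tfrac{\partial m}{\partial x}^\top\delta x(t_f)-\sum_i\tfrac{p(s_i)}{q(s_i)}\delta z_i(t_f)+\int_\tau^{t_f}\big(\tfrac{\partial\ell}{\partial x}+\tfrac{\partial\mu}{\partial x}^\top\tfrac{\partial\ell}{\partial u}\big)^\top\delta x\,dt+o(\lambda)$.

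Finally I would collapse this functional with an adjoint. Define $(\rho,\sigma_1,\dots,\sigma_N)$ by the backward equations $\dot\sigma_i=0$ (only the terminal cost depends on $z_i$), $\sigma_i(t_f)=-p(s_i)/q(s_i)$, and $\dot\rho=-\big(\tfrac{\partial\ell}{\partial x}+\tfrac{\partial\mu}{\partial x}^\top\tfrac{\partial\ell}{\partial u}\big)-A^\top\rho-\sum_i\sigma_i\,\tfrac{\eta}{T_r}\big(\tfrac{\partial g}{\partial x}+\tfrac{\partial\mu}{\partial x}^\top\tfrac{\partial g}{\partial u}\big)$ with $\rho(t_f)=\tfrac{\partial}{\partial x}m(x(t_f))$. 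A one-line computation gives $\tfrac{d}{dt}\big(\rho^\top\delta x+\sum_i\sigma_i\delta z_i\big)=-\big(\tfrac{\partial\ell}{\partial x}+\tfrac{\partial\mu}{\partial x}^\top\tfrac{\partial\ell}{\partial u}\big)^\top\delta x$; integrating over $[\tau+\lambda,t_f]$, absorbing the $o(\lambda)$ error from the switch interval, and using $\delta z_i(\tau^+)=o(\lambda)$, $\delta x(\tau^+)=\lambda(f_2-f_1)+o(\lambda)$ and continuity of $\rho$, the boundary terms telescope to $\delta J=\lambda\,\rho(\tau)^\top(f_2-f_1)+o(\lambda)$, so $\tfrac{\partial J}{\partial\lambda}\big|_{t=\tau}=\lim_{\lambda\to0^+}\delta J/\lambda=\rho(\tau)^\top(f_2-f_1)$, i.e.\ (\ref{eq:mode-insertion}). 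Because $\dot\sigma_i=0$, substituting the constant $\sigma_i\equiv-p(s_i)/q(s_i)$ into the $\dot\rho$ equation flips the sign of the $\sum_i\sigma_i(\cdots)$ term and reproduces \emph{exactly} the adjoint ODE (and terminal condition) in the statement. The main obstacle is careful bookkeeping rather than any deep step: one must notice that $D_\text{KL}$ is not a pointwise cost and promote each $q(s_i)$ to a state so $D_\text{KL}$ becomes terminal, then propagate the chain rule through $x_v=x_v(x,\mu(x))$ so the $\tfrac{\partial\mu}{\partial x}^\top\tfrac{\partial g}{\partial u}$ terms land in the right slot, while checking that the switch interval itself contributes only at order $o(\lambda)$.
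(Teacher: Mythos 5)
Your proposal is correct and reaches the stated mode-insertion gradient and adjoint equation, but it organizes the computation differently from the paper. The paper differentiates $J$ with respect to $\lambda$ directly: it splits $\partial_\lambda J$ into $\partial_\lambda D_\text{KL}$ and $\partial_\lambda J_\text{task}$, pushes the derivative through the nested integral defining $q(s_i)$ (which is where the $p(s_i)/q(s_i)$ factor and the $\tfrac{\partial g}{\partial x}+\tfrac{\partial\mu}{\partial x}^\top\tfrac{\partial g}{\partial u}$ chain-rule term appear), represents the trajectory perturbation via the variational/state-transition matrix $\Phi(t,\tau)$ acting on $f_2-f_1$, defines $\rho(\tau)^\top$ as the resulting integral against $\Phi(t,\tau)$, and then cites Axelsson et al.\ to convert that integral representation into the backward ODE with terminal condition $\rho(t_i+T)=\partial_x m$. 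You instead promote each $q(s_i)$ to an integrator state $z_i$ so that the coverage term becomes a genuine terminal cost, run the standard needle variation on the augmented block-triangular system, and derive the adjoint ODE from scratch via the telescoping identity $\tfrac{d}{dt}(\rho^\top\delta x+\sum_i\sigma_i\delta z_i)=-(\ell_x+\mu_x^\top\ell_u)^\top\delta x$, with the constant costates $\sigma_i\equiv-p(s_i)/q(s_i)$ reproducing exactly the sign and placement of the KL term in the stated $\dot\rho$. Your route is self-contained (no appeal to the reference for the integral-to-ODE step) and makes transparent why the $p(s_i)/q(s_i)$ ratio sits outside the time integral, at the cost of some extra machinery; the paper's route is shorter but leaves the equivalence of the integral form of $\rho$ and the ODE form to the citation. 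One point worth making explicit if you write this up fully: both you and the paper treat the kernel argument $x_v$ as a function of $(x,\mu(x))$ even on the switch interval, so the substitution of $\mu_\star$ for $\mu(x)$ contributes only through the state perturbation $\delta x$ and not directly through the action slot of $x_v$; your remark that $\delta z_i(\tau+\lambda)=o(\lambda)$ relies on this convention, and it is consistent with the paper's definition of $g=g(s_i\mid x(t),\mu(x(t)))$.
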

    \begin{proof}
        Taking the derivative of the objective (\ref{eq:objective}) with respect to the duration time $\lambda$ gives
        \[
            \frac{\partial }{\partial \lambda} J = \frac{\partial}{\partial \lambda} D_\text{KL}
            + \frac{\partial}{\partial \lambda} J_\text{task}.
        \]
        The term $\frac{\partial}{\partial \lambda} D_\text{KL}$ is calculated by
        \begin{align}\label{eq:kl-sensitivity}
            \footnotesize
            \frac{\partial}{\partial \lambda} D_{\text{KL}} \Bigg\vert_{t=\tau} & = -\sum_i  \frac{ p(s_i) }{q(s_i)} \frac{\eta}{T_r} \int_{\tau + \lambda}^{t_i + T}
            \left( \frac{\partial g}{ \partial x} + \frac{\partial\mu}{ \partial x}^\top\frac{\partial g}{\partial u} \right)^\top \frac{\partial x}{\partial \lambda}
            dt \nonumber
            \\
            & = -\sum_i  \frac{ p(s_i) }{q(s_i)} \frac{\eta}{T_r} \int_{\tau + \lambda}^{t_i + T}
            \left(\frac{\partial g}{ \partial x} + \frac{\partial\mu}{ \partial x}^\top\frac{\partial g}{\partial u} \right)^\top \Phi(t, \tau)
            dt \left(f_2 - f_1 \right)
        \end{align}
        where $g = g(s_i \mid  x(t), \mu(x(t)))= \exp\left[ -\frac{1}{2} \left( s_i - x_s(t) \right) \Sigma^{-1} \left( s_i - x_s(t) \right) \right]$,
        and $\Phi(t,\tau)$ is the state transition matrix for the integral equation
        \begin{equation}\label{eq:xdlambda}
            \frac{\partial x}{\partial \lambda} = (f_2 - f_1)
            + \int_{\tau + \lambda}^{t_i + T} \left( \frac{\partial f}{\partial x}
                + \frac{\partial f}{\partial u} \frac{\partial \mu}{\partial x}\right) ^\top
            \frac{\partial x}{\partial \lambda} dt
        \end{equation}
        where $f_2 = f(x(\tau), \mu_\star(\tau))$ and $f_1 = f(x(\tau), \mu(x(\tau))$.

        We can similarly show that the term $\frac{\partial}{\partial \lambda} J_\text{task}$ is given by
        \begin{align}\label{eq:task-sensitivity}
            \frac{\partial}{\partial \lambda} J_\text{task} \Bigg\vert_{t=\tau} & = \int_{\tau+\lambda}^{t_i + T} \left(
            \frac{\partial \ell}{\partial x} + \frac{\partial \mu}{\partial x}^\top \frac{\partial \ell}{\partial u}
            \right)^\top \frac{\partial x}{\partial \lambda} dt. \nonumber
            \\
            & = \int_{\tau+\lambda}^{t_i + T} \left(
            \frac{\partial \ell}{\partial x} + \frac{\partial \mu}{\partial x}^\top \frac{\partial \ell}{\partial u}
            \right)^\top \Phi(t, \tau) dt \left( f_2 - f_1 \right)
        \end{align}
        using the same expression in (\ref{eq:xdlambda}).
        Combining (\ref{eq:kl-sensitivity}) and (\ref{eq:task-sensitivity}) and taking the limit as $\lambda \to 0$ gives
            {\small
            \begin{equation} \label{eq:pre-lambda}
            \frac{\partial}{\partial \lambda} J = \int_{\tau}^{t_i + T}
            \left(
            \frac{\partial \ell}{\partial x} + \frac{\partial \mu}{\partial x}^\top \frac{\partial \ell}{\partial u}
             - \frac{\eta}{T_r}\sum_i \frac{p(s_i)}{q(s_i)} \left(\frac{\partial g}{ \partial x}
                + \frac{\partial\mu}{ \partial x}^\top\frac{\partial g}{\partial u} \right)
             \right)^\top \Phi(t, \tau) dt \left(f_2 - f_1 \right).
            \end{equation}}
        Setting
        \[
            \footnotesize
            \rho(\tau)^\top =   \int_{\tau}^{t_i + T}
            \left(
            \frac{\partial \ell}{\partial x} + \frac{\partial \mu}{\partial x}^\top \frac{\partial \ell}{\partial u}
             - \frac{\eta}{T_r}\sum_i \frac{p(s_i)}{q(s_i)} \left(\frac{\partial g}{ \partial x}
                + \frac{\partial\mu}{ \partial x}^\top\frac{\partial g}{\partial u} \right)
             \right)^\top \Phi(t, \tau) dt
        \]
        and from~\cite{axelsson_JOTA_modeinsertion} we can show that (\ref{eq:pre-lambda}) can be written as
        \[
            \frac{\partial}{\partial \lambda} J \Big |_{t=\tau} = \rho(\tau)^\top \left( f_2 - f_1 \right)
        \]
        where
        \[
            \footnotesize
            \dot{\rho} =
            - \left(
            \frac{\partial \ell}{\partial x} + \frac{\partial \mu}{\partial x}^\top \frac{\partial \ell}{\partial u}
            -\frac{\eta}{T_r}\sum_i \frac{p(s_i)}{q(s_i)} \left(\frac{\partial g}{ \partial x}
                + \frac{\partial\mu}{ \partial x}^\top\frac{\partial g}{\partial u} \right)
            \right) \\
             - \left(
            \frac{\partial f}{\partial x} + \frac{\partial f}{\partial u} \frac{\partial \mu}{\partial x}
             \right)^\top \rho.
        \]
        subject to the terminal condition $\rho(t_i + T) = \frac{\partial}{\partial x} m(x(t_i+T))$.
        \qed
    \end{proof}

    The sensitivity $\frac{\partial}{\partial \lambda}J$ is known as the mode insertion gradient~\cite{axelsson_JOTA_modeinsertion}.
    We can directly compute the mode insertion gradient for any control $\mu_\star$ that we choose.
    However, our goal is to find one such control $\mu_\star$ that reduces the objective (\ref{eq:objective}) but still
    maintains its value near the equilibrium policy $\mu(x)$.
    To solve for this control, we formulate the following objective function
    \begin{equation}\label{eq:secondary-obj}
        J_2 = \int_{t_i}^{t_i+T} \frac{\partial}{\partial \lambda} J \Big |_{t = \tau} +\frac{1}{2} \Vert \mu_\star(t) - \mu(x(t)) \Vert_R^2
    \end{equation}
    where $R\in \mathbb{R}^{m \times m}$ is a positive definite matrix that penalizes the deviation from the policy $\mu(x)$.
    \begin{proposition}
        The control vector that minimizes $J_2$ is given by
        \begin{equation} \label{eq:policy}
            \mu_\star(t) = - R^{-1} h(x(t))^\top \rho(t) + \mu(x(t)).
        \end{equation}
    \end{proposition}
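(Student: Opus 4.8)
The plan is to treat $J_2$ as a function of the pointwise value $\mu_\star(t)$ and minimize it by a straightforward first-order condition. The key observation is that, by Proposition 1 (the mode insertion gradient formula), the integrand of $J_2$ depends on $\mu_\star(t)$ only through the term $\rho(t)^\top\bigl(f_2 - f_1\bigr)$, where $f_2 = f(x(t),\mu_\star(t)) = g(x(t)) + h(x(t))\mu_\star(t)$ and $f_1 = f(x(t),\mu(x(t)))$ is independent of $\mu_\star$. Crucially, the adjoint $\rho(t)$ is defined by a differential equation that does not itself involve $\mu_\star$ (it depends on the nominal trajectory $x(t)$ generated under $\mu(x)$), so $\rho$ can be treated as a fixed, known function when optimizing over $\mu_\star$. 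This decouples the problem across time: minimizing the integral is equivalent to minimizing the integrand for each $t\in[t_i,t_i+T]$ separately.

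First I would substitute the control-affine form (\ref{eq:dynamics}) to write
\[
  \frac{\partial J}{\partial\lambda}\Big|_{t=\tau} = \rho(t)^\top\bigl(f_2 - f_1\bigr) = \rho(t)^\top h(x(t))\bigl(\mu_\star(t) - \mu(x(t))\bigr),
\]
since the $g(x(t))$ terms cancel. Then the integrand of $J_2$ becomes
\[
  \rho(t)^\top h(x(t))\bigl(\mu_\star(t) - \mu(x(t))\bigr) + \tfrac{1}{2}\Vert \mu_\star(t) - \mu(x(t))\Vert_R^2,
\]
which is a strictly convex quadratic in $\mu_\star(t)$ because $R$ is positive definite. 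Next I would take the gradient with respect to $\mu_\star(t)$, obtaining $h(x(t))^\top\rho(t) + R\bigl(\mu_\star(t) - \mu(x(t))\bigr)$, set it to zero, and solve to get
\[
  \mu_\star(t) = -R^{-1}h(x(t))^\top\rho(t) + \mu(x(t)),
\]
which is exactly (\ref{eq:policy}). Strict convexity guarantees this stationary point is the unique global minimizer, so no second-order check beyond noting $R\succ 0$ is needed.

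The only genuinely delicate point is justifying that $\rho$ may be regarded as independent of $\mu_\star$ during the minimization. This is the standard needle-variation setup behind the mode insertion gradient: $\rho$ solves the adjoint equation along the \emph{unperturbed} trajectory under $\mu(x)$, and the mode insertion gradient already captures the full first-order effect of an infinitesimal-duration switch to $\mu_\star$, so no further variation of $\rho$ with respect to $\mu_\star$ enters at first order. I would state this explicitly, citing the same construction as in Proposition 1, and then the remainder is the routine convex-quadratic minimization above. I do not expect any real obstacle beyond making this decoupling argument precise.
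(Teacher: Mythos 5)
Your proposal is correct and takes essentially the same route as the paper: differentiate the integrand of $J_2$ with respect to $\mu_\star(t)$, using the control-affine structure so that $\frac{\partial}{\partial \mu_\star}\left( \rho(t)^\top (f_2 - f_1) \right) = h(x(t))^\top \rho(t)$, invoke convexity in $\mu_\star$ (from $R \succ 0$), and solve the first-order condition. Your explicit remarks on the pointwise-in-$t$ decoupling and on $\rho$ being fixed under the needle variation are finer-grained than the paper's one-line computation, but the argument is the same.
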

    \begin{proof}
        Taking the derivative of (\ref{eq:secondary-obj}) with respect to $\mu_\star$ gives
        \begin{align}\label{eq:j2dmu}
            \frac{\partial}{\partial \mu_\star} J_2 & = \int_{t_i}^{t_i+T} \frac{\partial}{\partial \mu_\star} \left( \rho(t)^\top ( f_2 - f_1) \right)
            + R (\mu_\star (t) - \mu(x(t))) dt \nonumber \\
            & =  \int_{t_i}^{t_i+T}  h(x(t))^\top \rho(t) + R (\mu_\star (t) - \mu(x(t))) dt.
        \end{align}
        Since $J_2$ is convex in $\mu_\star$, we set the expression in (\ref{eq:j2dmu}) to zero and solve for $\mu_\star$ which gives us
        \begin{equation*}
            \mu_\star(t) = - R^{-1}h(x(t))^\top \rho(t) + \mu(x(t))
        \end{equation*}
        which is a schedule of control values that reduce the objective for time $t \in \left[t_i , t_i+T \right]$.
        \qed
    \end{proof}
    This controller reduces (\ref{eq:objective}) for $\lambda>0$ that is sufficiently small.
    The reduction in (\ref{eq:objective}), $\Delta J$, by applying $\mu_\star(\tau)$ can be approximated as
    $\Delta J \approx \frac{\partial}{\partial \lambda} J \lambda \mid_{t=\tau}$.
    Ensuring that $\frac{\partial }{\partial \lambda} J < 0$ is an indicator that the robot is always actively pursuing data and
    reducing the objective (\ref{eq:objective}).
    \begin{corollary}
        Let us assume that $\frac{\partial}{\partial \mu}\mathcal{H}\neq 0$ $\forall t \in \left[ t_i, t_i + T \right]$,
        where $\mathcal{H}$ is the control Hamiltonian.
        Then $\frac{\partial}{\partial \lambda} J < 0$ $\forall \mu_\star(t) \in \mathcal{U}$ where $\mathcal{U}$ is the
        control space.
    \end{corollary}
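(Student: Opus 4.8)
The plan is to evaluate the mode insertion gradient (\ref{eq:mode-insertion}) along the controller (\ref{eq:policy}) and then exploit the control-affine structure of (\ref{eq:dynamics}) to collapse it into a definite quadratic form in $\rho$. First I would write $f_2 - f_1 = f(x,\mu_\star) - f(x,\mu) = h(x)\left(\mu_\star(t) - \mu(x(t))\right)$, which holds exactly since the drift $g(x)$ cancels. Substituting $\mu_\star - \mu = -R^{-1}h(x)^\top\rho$ from (\ref{eq:policy}) into (\ref{eq:mode-insertion}) gives
\[
\frac{\partial J}{\partial\lambda}\Big|_{t=\tau} = \rho(\tau)^\top h(x(\tau))\left(-R^{-1}h(x(\tau))^\top\rho(\tau)\right) = -\left\Vert h(x(\tau))^\top\rho(\tau)\right\Vert_{R^{-1}}^2 .
\]
Since $R \succ 0$ implies $R^{-1} \succ 0$, this quantity is $\le 0$ for every $\tau \in [t_i, t_i+T]$, with equality if and only if $h(x(\tau))^\top\rho(\tau) = 0$.

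Next I would rule out the equality case using the hypothesis $\partial\mathcal{H}/\partial\mu \neq 0$. The key observation is that $h(x)^\top\rho$ is precisely the gradient of the control Hamiltonian with respect to the inserted control: in the switched objective the running cost $\ell$ and the KL integrand are evaluated along the fixed equilibrium policy $\mu(x)$, so the only dependence of $\mathcal{H}$ on $\mu_\star$ enters through $\rho^\top f(x,\cdot)$, whence $\partial\mathcal{H}/\partial\mu_\star = h(x)^\top\rho$. Equivalently, this is the bracketed term $h(x)^\top\rho + R(\mu_\star - \mu)$ in (\ref{eq:j2dmu}) evaluated at $\mu_\star = \mu$. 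Hence $\partial\mathcal{H}/\partial\mu \neq 0$ on $[t_i, t_i+T]$ forces $h(x(\tau))^\top\rho(\tau)\neq 0$ for all $\tau$, so the inequality above is strict. Because $\mathcal{U} = \mathbb{R}^m$ is unconstrained, the minimizer (\ref{eq:policy}) is interior, so it is a valid element of $\mathcal{U}$ and is the controller the statement refers to.

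The main obstacle I anticipate is not the calculation but the bookkeeping around the definition of $\mathcal{H}$: one must be explicit that the ``control Hamiltonian'' here is built with the costate $\rho$ of the adjoint system from the first proposition (so the KL contribution already lives inside $\rho$ and its dynamics rather than appearing as an explicit $\mu_\star$-term), and one must read ``$\forall\,\mu_\star(t)\in\mathcal{U}$'' as ``for the controller (\ref{eq:policy})'' — since, e.g., $\mu_\star\equiv\mu$ trivially yields $\partial J/\partial\lambda = 0$, so the claim cannot hold for every admissible control. Once that identification is fixed, strictness is immediate from positive-definiteness of $R^{-1}$ together with the nonvanishing of $h(x)^\top\rho$.
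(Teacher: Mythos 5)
Your proposal is correct and follows essentially the same route as the paper: substitute the control (\ref{eq:policy}) into the mode insertion gradient, use the control-affine structure so the drift and $\mu(x)$ cancel, and obtain $-\Vert h(x)^\top\rho\Vert_{R^{-1}}^2$, with the hypothesis $\partial\mathcal{H}/\partial\mu\neq 0$ ruling out the degenerate case. If anything, you are more explicit than the paper in identifying $h(x)^\top\rho$ as $\partial\mathcal{H}/\partial\mu_\star$ to justify strictness, and in flagging that the quantifier over $\mathcal{U}$ must be read as referring to the synthesized controller; both are faithful clarifications of the paper's argument rather than a different proof.
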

    \begin{proof}
        Inserting (\ref{eq:policy}) into (\ref{eq:mode-insertion}) gives
        \begin{align}\label{eq:neg-djdlam}
            \frac{\partial}{\partial \lambda} J &= \rho(t) ^\top \left( f_2 - f_1\right) \\
            & = \rho(t)^\top\left(
            g(x(t)) + h(x(t)) \mu_\star(t) - g(x(t)) - h(x(t)) \mu(x(t))
            \right) \nonumber.
        \end{align}
        Because of the manner in which we chose to solve for $\mu_\star(t)$, $g(x)$ and $\mu(x(t))$ cancel out in (\ref{eq:neg-djdlam}).
        In addition,  $\frac{\partial}{\partial \mu} \mathcal{H} \neq 0$ implies that $\frac{\partial}{\partial \lambda} J \neq 0$
        and the policy $\mu(x(t))$ is not an optimizer of (\ref{eq:objective}).
        As a result, we can further analyze $\frac{\partial}{\partial \lambda} J$ without the need to consider the policy $\mu(x)$.
        This gives us the following expression
        \begin{equation*}
            \frac{\partial}{\partial \lambda} J = - \rho(t)^\top h(x(t)) R^{-1} h(x(t))^\top \rho(t)
        \end{equation*}
        which we can rewrite as
        \begin{equation} \label{eq:neg-djdlam2}
            \frac{\partial}{\partial \lambda} J= - \Vert h(x(t))^\top \rho \Vert_{R^{-1}}^2 < 0.
        \end{equation}
        Thus, (\ref{eq:neg-djdlam2}) shows us that $\frac{\partial}{\partial \lambda}J$ is always negative subject to the
        schedule of control vectors (\ref{eq:policy}) and the objective is being reduced when $\mu_\star(t)$ is applied.\qed
    \end{proof}

    We automate the switching between $\mu(x(t))$ and $\mu_\star(t)$ by choosing a $\tau$ and $\lambda$ such that
    $\frac{\partial}{\partial \lambda} J$ is most negative and $\Delta J < 0$.
    This is done through the combination of choosing $\tau$ with a 1-dimensional optimization and solving for $\lambda$ using a
    line search until $\Delta J < 0$~\cite{mavrommati2018real, abrahamDecentralized2018}.
    By choosing $\lambda < T$ we can place a bound on how much our algorithm excites the dynamical system through
    Lyapunov analysis (Theorem~\ref{tmh1}).

    \begin{theorem}\label{tmh1}
        Assume there exists a Lyapunov function $V(x)$ for (\ref{eq:dynamics}) such that under the policy $\mu(x)$, $x(t)$ is
        asymptotically stable.
        That is, $\dot{V}(x) < 0$ $\forall \mu(x),x\in \mathcal{B}$ where
        $\mathcal{B} = \{ x\in \mathbb{R}^n | \Vert x \Vert < r \}$ for $r > 0$.
        Then, given the schedule of control vectors (\ref{eq:policy}) $\mu_\star(t)$
        $\forall t \in \left[ \tau, \tau + \lambda \right]$,
        $V(x(t)) - V(x(t), \mu(x(t))) \le \lambda \beta$, where
        $V(x(t), \mu(x(t)))$ is the Lyapunov function subject to the policy $\mu(x)$,
        and $\beta = \sup_{t \in \left[ \tau, \tau + \lambda \right]} - \frac{\partial V} {\partial x}h(x(t))R^{-1}h(x(t))^\top \rho(t)$.
    \end{theorem}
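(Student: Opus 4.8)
The plan is to compare the Lyapunov function along two trajectories that share the state $x(\tau)$ at the switching instant: the one generated by the equilibrium policy $\mu(x)$ and the one generated by the switched control $\mu_\star(t)$ of (\ref{eq:policy}) over the insertion window $[\tau,\tau+\lambda]$. I would introduce the gap $W(t) = V(x(t)) - V(x(t),\mu(x(t)))$, note that $W(\tau)=0$ since the two trajectories coincide there, and then bound $W(t)$ on the window by integrating $\dot W$.

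First I would differentiate $V$ along each flow with the chain rule and the dynamics (\ref{eq:dynamics}): along the switched trajectory $\tfrac{d}{dt}V = \tfrac{\partial V}{\partial x}\,(g(x)+h(x)\mu_\star)$, and along the nominal trajectory $\tfrac{d}{dt}V = \tfrac{\partial V}{\partial x}\,(g(x)+h(x)\mu(x))$. Subtracting and cancelling the unactuated term $g(x)$ — exactly as in the proof of the preceding corollary — gives
\[
\dot W(t) = \frac{\partial V}{\partial x}\, h(x(t))\,\bigl(\mu_\star(t)-\mu(x(t))\bigr),
\]
to first order in $\lambda$; the two trajectories differ by $O(\lambda)$ on the window, so evaluating $\partial V/\partial x$ and $h$ along either one perturbs the integrand only at higher order, which is consistent with the infinitesimal-duration setting in which the mode-insertion gradient (\ref{eq:mode-insertion}) was derived. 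Next I would substitute the minimizer from Proposition~2, namely $\mu_\star(t)-\mu(x(t)) = -R^{-1}h(x(t))^\top\rho(t)$, to obtain $\dot W(t) = -\tfrac{\partial V}{\partial x}\,h(x(t))R^{-1}h(x(t))^\top\rho(t)$. Integrating from $\tau$ and using $W(\tau)=0$,
\[
V(x(t)) - V(x(t),\mu(x(t))) = \int_\tau^{t}\dot W(\sigma)\,d\sigma \;\le\; (t-\tau)\sup_{\sigma\in[\tau,\tau+\lambda]}\dot W(\sigma) \;\le\; \lambda\beta
\]
for every $t\in[\tau,\tau+\lambda]$, since the supremum of $\dot W$ over a window of length $\lambda$ is precisely $\beta$ as defined in the statement.

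The main obstacle is the bookkeeping around the fact that $x(t)$ names two different trajectories on the two sides of the inequality. I would handle this either by reading the claim as a first-order-in-$\lambda$ estimate — appealing to the same needle-variation / small-$\lambda$ hypothesis already used to define $\partial J/\partial\lambda$ — or, for an exact statement, by carrying the trajectory sensitivity $\partial x/\partial\lambda$ through as in (\ref{eq:xdlambda}) and absorbing the remainder into the constant $\beta$. Finally, I would observe that choosing $\lambda<T$ keeps $\beta$ finite: with $\rho$, $h$, and $\partial V/\partial x$ continuous, $\dot W$ is continuous on the compact interval $[\tau,\tau+\lambda]$, so the excursion of $V$ above its nominal value is uniformly bounded by $\lambda\beta$, which is what allows Lyapunov attractiveness to be recovered once the switched control is deactivated and $\mu(x)$ resumes with $\dot V<0$ on $\mathcal{B}$.
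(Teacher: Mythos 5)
Your argument is essentially the paper's: the proof there derives the same pointwise identity $\dot V(x,\mu_\star)=\dot V(x,\mu(x))-\frac{\partial V}{\partial x}h(x)R^{-1}h(x)^\top\rho$ by cancelling the unactuated term $g(x)$, integrates it over $[\tau,\tau+\lambda]$ inside the split integral form of $V(x(t))$, and bounds the extra term by $\lambda\beta$ via the supremum. The two-trajectory bookkeeping you worry about does not actually arise in the paper, because $V(x(t),\mu(x(t)))$ is \emph{defined} as $V(x(0))+\int_0^t\dot V(x(s),\mu(x(s)))\,ds$ evaluated along the realized (switched) trajectory, so the identity is exact and no first-order-in-$\lambda$ approximation is needed.
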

    \begin{proof}
        Writing the integral form of the Lyapunov function switching between $\mu(x(t))$ and $\mu_\star(t)$ at time $\tau$ for a duration of time $\lambda$ starting at $x(0)$ gives
            { \footnotesize
            \begin{align}\label{eq-lyap-switch}
                V(x(t)) & = V(x(0)) && + \int_{0}^{t} \dot{V}(x(s), \mu(x(s)) ) ds \nonumber \\
                & = V(x(0)) && + \int_{0}^{\tau}  \dot{V}(x(s), \mu(x(s)) ) ds  \\
                & && +  \int_{\tau}^{\tau+\lambda}  \dot{V}(x(s), \mu_\star(s) ) ds
                + \int_{\tau+\lambda}^{t}  \dot{V}(x(s), \mu(x(s)) ) ds, \nonumber
            \end{align}}
        where we explicitly write the dependency on $\mu(x(t))$ in $\dot{V}$.
        Using chain rule, we can write
        \begin{equation}\label{eq-lyap-chain}
            \dot{V}(x,u) = \frac{\partial V}{\partial x} f(x,u) = \frac{\partial V}{\partial x}g(x)
            + \frac{\partial V}{\partial x}h(x) u.
        \end{equation}
        By inserting (\ref{eq:policy}) in (\ref{eq-lyap-chain}) we can show the following identity:
        \begin{align}\label{eq-affine-prop}
            \dot{V}(x, \mu_\star) &= \frac{\partial V}{\partial x}g(x) + \frac{\partial V}{\partial x}h(x) \mu_\star\nonumber \\
            & = \frac{\partial V}{\partial x}g(x) + \frac{\partial V}{\partial x}h(x) \left(
            -R^{-1}h(x)^\top \rho + \mu(x)
            \right) \nonumber\\
            & = \dot{V}(x, \mu(x)) - \frac{\partial V}{\partial x} h(x) R^{-1} h(x)^\top \rho .
        \end{align}
        Using (\ref{eq-affine-prop}) in (\ref{eq-lyap-switch}), we can show that
        \begin{align}\label{eq-lyap-cool-form}
            V(x(t)) &= V(x(0)) + \int_{0}^{t} \dot{V}(x(s), \mu(x(s)) ) ds - \int_{\tau}^{\tau + \lambda}
                \frac{\partial V}{\partial x}h(x(s))R^{-1} h(x(s))^\top \rho(s) ds \nonumber \\
             &= V(x(t), \mu(x(t))) - \int_{\tau}^{\tau + \lambda} \frac{\partial V}{\partial x}h(x(s))R^{-1} h(x(s))^\top \rho(s) ds
        \end{align}
        where $V(x(t), \mu(x(t))) = V(x(0)) + \int_{0}^{t} \dot{V}(x(s), \mu(x(s)) ) ds$.

        Letting the largest value of $\frac{\partial V}{\partial x}h(x(s))R^{-1} h(x(s))^\top \rho(s)$ be given by
        $\beta = \sup_{t \in \left[ \tau, \tau + \lambda \right]} - \frac{\partial V} {\partial x}h(x(t))R^{-1}h(x(t))^\top \rho(t)>0,$
        we can approximate (\ref{eq-lyap-cool-form}) as
        \begin{align}
            V(x(t)) &= V(x(t), \mu(x(t)) - \int_{\tau}^{\tau + \lambda} \frac{\partial V}{\partial x}h(x(s))R^{-1} h(x(s))^\top \rho(s) ds\\
            & \le V(x(t), \mu(x(t))) + \beta \lambda.
        \end{align}
        Subtracting both side by $V(x(t), \mu(x(t)))$ gives the upper bound on instability
        \begin{equation}
        V(x(t)) - V(x(t), \mu(x(t))) \le \beta \lambda
        \end{equation} for the active data collection process.
        \qed
    \end{proof}
    By fixing the maximum value of $\lambda$, we can provide an upper bound to the change of the Lyapunov function during
    active data acquisition.
    Moreover, we can tune our control vector $\mu_\star(t)$ using the regularization value $R$ such that as
    $\Vert R \Vert \to \infty$, $\beta \to 0$ and $\mu_\star(t) \to \mu(x(t))$.
    With this bound, we can  guarantee Lyapunov attractiveness~\cite{polyakov2014stability}, where the system (\ref{eq:dynamics})
    is not Lyapunov stable, but rather there exists a time $t$ such that the system (\ref{eq:dynamics}) is guaranteed to
    return to a region of attraction where the system can be guided towards a stable equilibrium state $x_0$.
    This property will play an important role in examples in Section~\ref{sec-results}.
    \begin{definition}
        A dynamical system (\ref{eq:dynamics}) is Lyapunov attractive if at some time $t$, the trajectory of the system
        $x(t) \in \mathcal{C}(t) \subset \mathcal{B}$ where $\mathcal{C}(t) = \{ x(t) \in \mathbb{R}^n | V(x) \le c, \dot{V}(x(t)) <0\}$
        and $\lim_{t\to\infty}  x(t)  \to x_0$ such that $x_0$ is an equilibrium state.
    \end{definition}
    \begin{theorem}\label{thm2}
        Given the schedule of control vectors
        (\ref{eq:policy}) $\mu_\star(t)$ $\forall t \in \left[\tau, \tau + \lambda \right]$, the robotic system governed by the
        dynamics in (\ref{eq:dynamics}) is Lyapunov attractive such that $\lim_{t\to\infty}  x(t, \tau, \lambda)  \to x_0$, where
        {\smaller
        \begin{multline*}
            x(t, \tau, \lambda) = x(0)
            + \int_{0}^{\tau} f(x(s), \mu(x(s)) ds + \int_{\tau}^{\tau+\lambda} f(x(s), \mu_\star(s) ds
            + \int_{\tau+\lambda}^{t} f(x(s), \mu(x(s)) ds,
        \end{multline*}}
        is the solution to switching between stable and exploratory motions for duration $\lambda$ starting at time $\tau$.
    \end{theorem}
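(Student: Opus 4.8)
The strategy is to use Theorem~\ref{tmh1} to confine the switched trajectory to the region $\mathcal{B}$ on which $\mu(x)$ is stabilizing, and then to exploit the fact that no exploratory control acts after time $\tau+\lambda$, so that classical Lyapunov theory delivers convergence to $x_0$ on the tail.

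First I would split $x(t,\tau,\lambda)$ into its three phases: the pre-switch phase $[0,\tau]$ and the post-switch phase $(\tau+\lambda,\infty)$, both governed by $\dot x = f(x,\mu(x))$, and the exploratory phase $[\tau,\tau+\lambda]$ governed by $\mu_\star$. On $[0,\tau]$ the hypothesis $\dot V(x,\mu(x))<0$ on $\mathcal{B}$ keeps $V$ nonincreasing, so $x(t)$ remains in the sublevel set $\{V\le V(x(0))\}$. For the exploratory phase I would invoke the integral identity (\ref{eq-lyap-cool-form}) together with the definition of $\beta$ from the proof of Theorem~\ref{tmh1} to obtain $V(x(t)) \le V(x(t),\mu(x(t))) + \beta\lambda \le V(x(0)) + \beta\lambda$ for every $t\ge0$. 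I would then choose $\lambda$ small enough (as already anticipated in the discussion preceding the statement) that $c := V(x(0)) + \beta\lambda$ defines a sublevel set $\{x : V(x)\le c\}$ still contained in $\mathcal{B}$; this is possible because $V$ is continuous and positive definite about $x_0$, so its sublevel sets contract to $\{x_0\}$.

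With the trajectory thus confined to $\mathcal{B}$, I would evaluate the state at $t=\tau+\lambda$: the control has reverted to $\mu(x)$, so $\dot V(x(\tau+\lambda))<0$, and $V(x(\tau+\lambda))\le c$; hence $x(\tau+\lambda)\in\mathcal{C}(\tau+\lambda)=\{x : V(x)\le c,\ \dot V(x)<0\}\subset\mathcal{B}$, which is exactly the entry condition in the definition of Lyapunov attractiveness. On $(\tau+\lambda,\infty)$ the dynamics reduce to $\dot x = f(x,\mu(x))$ started from $x(\tau+\lambda)\in\mathcal{B}$, so the asymptotic stability hypothesis gives $\lim_{t\to\infty}x(t,\tau,\lambda)\to x_0$, completing the argument.

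The step I expect to be delicate is the confinement claim, which is mildly circular as phrased: $\beta$ is a supremum taken along the exploratory window, and the inequality $V(x(t),\mu(x(t)))\le V(x(0))$ presupposes the state already lies in $\mathcal{B}$. I would remove the circularity with a continuity/bootstrapping argument — let $t^\star$ be the first time $x(t)$ reaches the boundary of $\{V\le c\}$, note that $h$, $\rho$ and $\partial V/\partial x$ are bounded on the compact set $\{V\le c\}$ so that $\beta$ admits an a priori bound there, and then for $\lambda$ sufficiently small conclude $V(x(t^\star)) \le V(x(0)) + \beta\lambda < c$, a contradiction; therefore no such $t^\star$ exists and the trajectory never leaves $\mathcal{B}$. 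Everything else reduces to applying Theorem~\ref{tmh1} and standard Lyapunov arguments.
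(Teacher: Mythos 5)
Your proof is correct and shares the paper's overall skeleton (use the $\beta\lambda$ excursion bound from Theorem~\ref{tmh1} during the exploratory window, then let the stabilizing policy $\mu(x)$ win out), but you execute two steps differently, and in both cases more carefully than the paper does. First, the paper simply \emph{assumes} that $x(\tau+\lambda)\in\mathcal{C}(\tau+\lambda)\subset\mathcal{B}$ as an explicit hypothesis inside the proof, whereas you derive the confinement by choosing $\lambda$ small enough that the sublevel set $\{V\le V(x(0))+\beta\lambda\}$ stays inside $\mathcal{B}$, and you correctly flag and resolve the circularity (that $\beta$ is a supremum along a trajectory you have not yet confined) with a first-exit-time bootstrapping argument; this is a genuine improvement, since without it the theorem's conclusion rests on an unstated smallness condition on $\lambda$. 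Second, on the tail $t>\tau+\lambda$ the paper writes the quantitative bound $V(x(t))\le V(x(0))-\gamma t+\beta\lambda$ with $-\gamma=\sup_s \dot V(x(s),\mu(x(s)))<0$ and argues that $\gamma t$ eventually dominates $\beta\lambda$; taken literally this bound sends $V$ to $-\infty$, and $\gamma$ degenerates to $0$ as $x\to x_0$, so the paper's rate argument is only heuristic. Your alternative, invoking asymptotic stability of $\mu(x)$ on $\mathcal{B}$ once the state is confined there, sidesteps this entirely and delivers $\lim_{t\to\infty}x(t,\tau,\lambda)\to x_0$ cleanly. The paper's version buys an (informal) rate statement relating $\gamma t$ to $\beta\lambda$, which motivates the tuning discussion around $R$ and $\lambda$; yours buys rigor at the cost of that quantitative flavor.
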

    \begin{proof}
        Assume there exists a Lyapunov function such that $\dot{V}(x) <0$ under the policy $\mu(x)$.
        Moreover, assume that subject to the control vector $\mu_\star(t)$, the trajectory
        $x(\tau+\lambda) \in \mathcal{C}(\tau + \lambda)\subset \mathcal{B}$ where
        $\mathcal{C}(t) = \{x(t) \in \mathbb{R}^n | V(x) \le c, \dot{V}(x(t), \mu(x(t)) < 0 \}$ where $c>0$.
        Using Theorem~\ref{tmh1}, the integral form of the Lyapunov function (\ref{eq-lyap-switch}), and the
        identity (\ref{eq-affine-prop}), we can write
        \begin{multline}
            V(x(t)) = V(x(0)) + \int_{0}^{t} \dot{V}(x(s), \mu(x(s)) ) ds\\
            - \int_{\tau}^{\tau + \lambda} \frac{\partial V}{\partial x}h(x(s))R^{-1} h(x(s))^\top \rho(s) ds
             \le V(x(0)) - \gamma t + \beta \lambda,
        \end{multline}
        where $ -\gamma = \sup_{s \in \left[0, t \right]} \dot{V}(x(s), \mu(x(s))) < 0$.
        Since $\lambda$ is fixed and $\beta$ can be tuned by the matrix weight $R$, we can choose a $t$ such that
        $\gamma t \gg \beta \lambda$.
        Thus, $\lim_{t\to \infty} V(x(t)) \to V(x_0)$ and $\lim_{t\to\infty}  x(t, \tau, \lambda)  \to x_0$, implying
        Lyapunov attractiveness,  where $V(x_0)$ is the minimum of the Lyapunov function at the equilibrium state $x_0$.
        \qed
    \end{proof}
    Asymptotic attractiveness shows us that the robot will return to a region where $V(x)$ will return to a minimum under
    policy $\mu(x)$, allowing the robot to actively explore and collect data safely.
    Moreover, we can choose the value of $\lambda$ and $\mu_\star$ in automating the active data acquisition such that
    attractiveness always holds, giving us an algorithm that is safe for active data collection.

    All that is left is to define a spatial distribution that actively selects which measurements are more informative to the learning task.

    \vspace{-3mm}
    \subsubsection*{Measure of Data Importance for Model Learning}
        Our goal is to provide a method that is general to any form of learning that requires a robot to actively seek out
        measurements through action.
        This may include area mapping or learning the dynamics of the robot.
        Thus, we use measures that allow the robot to quantify where in the search domain there exists useful data that
        needs to be collected.
        While there exists many measures that can select important data subject to a learning task, we use a measure of
        linear independence~\cite{scholkopf1999input, nguyen_NEURO_incremental_sparse_gp, yan_incremental_sparse_gp}.
        This measure is often used in sparse Gaussian processes~\cite{nguyen_NEURO_incremental_sparse_gp,yan_incremental_sparse_gp}
        where a data set $\mathcal{D}=\{x_i, y_i\}_{i=1}^M$ is comprised of $M$ input measurements $x_i\in \mathbb{R}^v$ and $M$
        output measurements $y_i \in \mathbb{R}^c$ such that each data point maximizes the measure of linear independence.
        We use this measure of independence, also known as a measure of importance, to create a distribution for which the robot
        will provide area coverage in the search domain for active data collection.

        As illustrated in ~\cite{nguyen_NEURO_incremental_sparse_gp}, this is done by evaluating a new measurement
        $x_{M+1}, y_{M+1}$ against the existing data points in $\mathcal{D}$ given the structure of the model that is being learned.
        \begin{definition}\label{eq-importance-measure}
            The importance measure $\delta \in \mathbb{R}^+$ for a new measurement pair $\{x_{M+1}, y_{M+1} \}$ is given by
            \begin{equation}
            \delta = k(x_{m+1}, x_{m+1}) - \bold{k}^\top \bold{a}
            \end{equation}
            which is the solution to $\delta = \Vert \sum_{i=1}^M a_i \phi(x_i) - \phi(x_{M+1}) \Vert ^2$,
            where $\phi(x)$ are the basis functions (also known as feature vectors)
            \footnote{This feature vector can be anything from a Fourier set of basis functions or a neural network.
            In addition, we can parameterize the functions $\phi(x)=\phi(x,\theta)$ and have the functions change over time.},
            $a_i$ is the coefficient of linear dependence,
            the matrix $K \in \mathbb{R}^{M \times M}$ is known as the kernel matrix with elements $K_{i,j} = k(x_i, x_j)$
            such that $k : \mathbb{R}^{v \times v} \to \mathbb{R}$ is the kernel function given by the inner product
            $k(x_i, x_j) = \langle \phi(x_i), \phi(x_j) \rangle$, $\bold{k} = [ k(x_1, x_{m+1}), k(x_2, x_{m+1}), \ldots, $ $k(x_m, x_{m+1})]^\top$,
            and $\bold{a} = K^{-1} \bold{k}$.
        \end{definition}

        The value $\delta$ provides a measure of how well the point $x_{M+1}$ can be represented given the existing data set
        and structure of the model being learned.
        Note that this measure will be computationally intractable for very large $M$.
        Instead, other measures like the expected information density derived from the Fisher information
        matrix~\cite{miller2016ergodic, emery_optimal_exp_design} can be used if the learning task has a model that is
        parameterized by a set of parameters $\theta$.
        Since $\delta > 0$, we define an importance distribution for which the robot will use generate area coverage.
        \begin{definition}
            The importance distribution is
            \begin{equation}\label{eq-importance-dist}
                p(s) = \frac{1}{\eta}\left( k(s, s) - \bold{k}(s)^\top \bold{a}(s) \right)
            \end{equation}
            where $\eta = \int_{\mathcal{X}^v} k(s,s) - \bold{k}(s)^\top \bold{a}(s)ds$, and $\bold{k}$, $\bold{a}$ are
            functions of points $s \in \mathcal{X}^v$.
        \end{definition}
        Note that $p(s)$ will change as $\mathcal{D}$ is augmented or pruned.
        If at any time $\delta > \delta_i$ for $i = \left[1, \ldots, M \right]$, we remove the $i^\text{th}$ point with the
        lowest $\delta$ value and add in the new data point.

        We provide an outline of our method in Algorithm~\ref{alg-AOL} for online data acquisition.
        The following section evaluates our method on various simulated environments.

        \begin{algorithm}[!h]
            \small
        \caption{Active Data Acquisition from Equilibrium }
        \label{alg-AOL}
        \centering
        \begin{algorithmic}[1]
        \State \textbf{initialize:} local dynamics model, initial condition $x(t_0), $ initial equilibrium policy $\mu(x)$,
            learning task model structure $\phi(x)$.
        \For{$i = 0, \ldots, \infty$}
            \State simulate $x(t)$ with $\mu(x(t))$ from $x(t_i)$ using dynamics model $f(x,u)$
            \State calculate $\rho(t)$ and $\frac{\partial}{\partial \lambda} J $
            \State compute control $\mu_\star(t)$ for $t \in \left[t_i, t_i+T\right]$
            \State choose $\tau, \lambda$ that minimizes $\frac{\partial}{\partial \lambda} J $
            \State apply $\mu_\star (\tau) \text{ if } t \in \left[ \tau, \tau + \lambda \right] \text{ else apply } \mu(x(t))$ to robot
            \State sample state $x(t_{i+1})$ and measurement $y(t_{i+1})$
            \State verify importance $\delta$ and update $p(s)$ if $\delta > \delta_i \forall i \in \left[1, \ldots, M \right]$
        \EndFor
        \end{algorithmic}
        \end{algorithm}

\vspace{-5mm}
\section{Simulated Examples}\label{sec-results}
    \begin{figure*}
        \centering
        \begin{subfigure}[b]{.35\textwidth}
            \centering
            \includegraphics[width=0.3\linewidth]{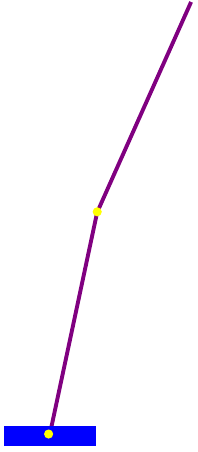}
            \caption{Cart Double Pendulum}
            \label{fig-env-1}
        \end{subfigure}
        \begin{subfigure}[b]{.35\textwidth}
            \centering
            \includegraphics[width=0.7\linewidth]{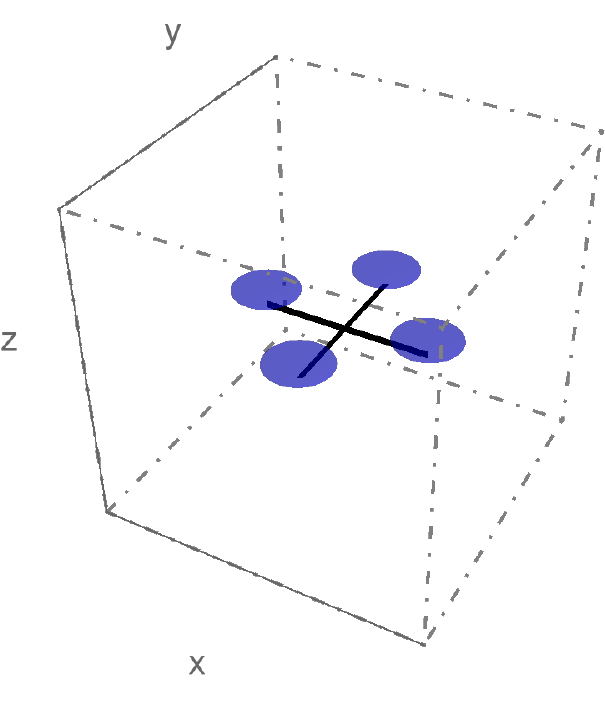}
            \caption{Quadrotor}
            \label{fig-env-2}
        \end{subfigure}
        \begin{subfigure}[b]{.28\textwidth}
            \includegraphics[width=\linewidth]{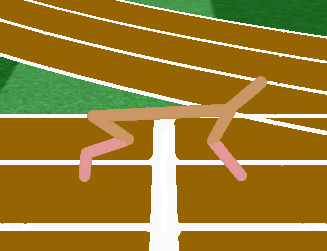}
            \caption{Half-Cheetah}
            \label{fig-env-3}
        \end{subfigure}
        \caption{
                Simulated experimental environments (a) cart double pendulum and (b) quadcopter, and (c) half-cheetah.
                The results for each system may be seen in the accompanying multimedia supplement.
        }
        \label{fig-envs}
        \vspace{-5mm}
    \end{figure*}

    In this section, we illustrate examples of Algorithm~\ref{alg-AOL} for different examples that may be encountered in robotics.
    Figure~\ref{fig-envs} depicts three robotic systems on which we base our examples.
    In the first example, we use a cart double pendulum for use in area coverage for shape estimation.
    In the second and third example, we use a 22 dimensional quadrotor~\cite{Fan-RSS-16} and a 26 dimensional half-cheetah model
    from Roboschool~\cite{klimov2017roboschool} for learning a dynamics model of the robotic systems by exploring in the state-space.
    For implementation details, including parameters used, we refer the reader to the appendix.

    \subsubsection*{Shape Estimation while Stabilizing Cart Double Pendulum}
        Our first example demonstrates the functionality of our algorithm for estimating a sinusoidal shape while simultaneously
        balancing a cart double pendulum in its upright position.
        The purpose of this example is to show that our method can synthesize actions that ensures the cart double pendulum
        is maintained upright while actively collecting data for estimating the shape.
        This example also serves the purpose of illustrating that our method can safely automate choosing when to stabilize and
        when to explore for data using approximate linear models of the robot
        dynamics and stabilizing policies derived from the approximate models.

        \begin{figure*}[h!]
            \centering
            \begin{subfigure}{0.24\textwidth}
                \includegraphics[width=\linewidth]{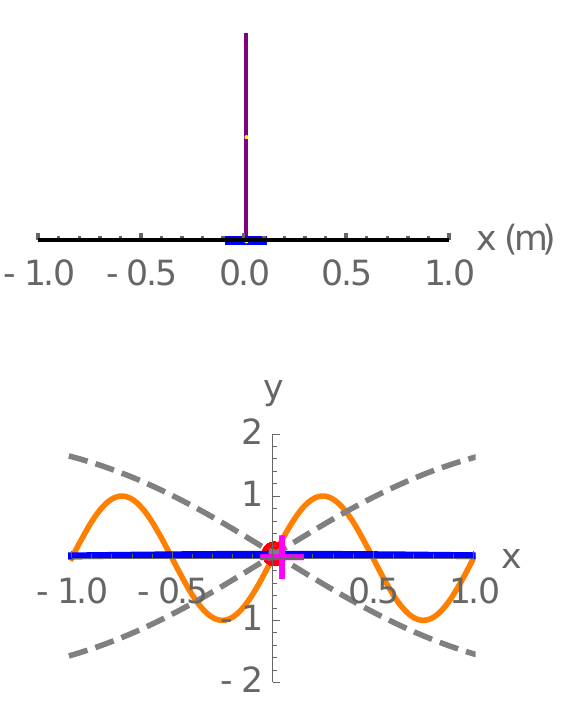}
                \caption{$t = 0$}
            \end{subfigure}
            \begin{subfigure}{0.24\textwidth}
                \includegraphics[width=\linewidth]{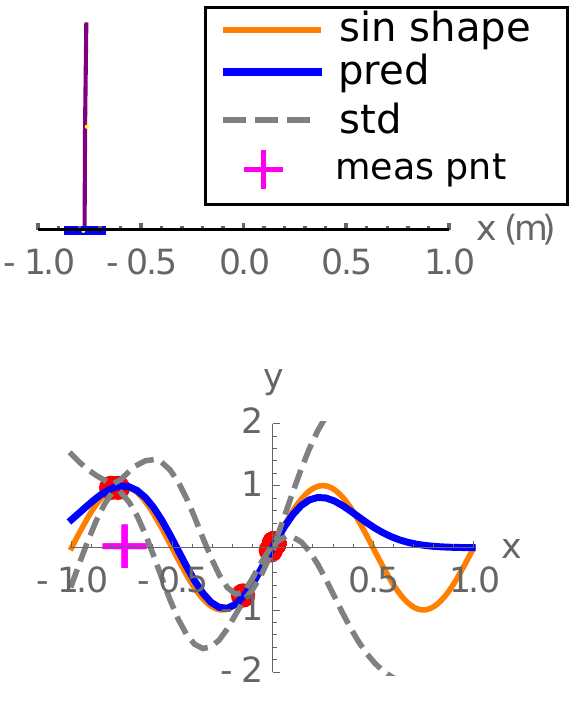}
                \caption{$t = 6$}
            \end{subfigure}
            \begin{subfigure}{0.24\textwidth}
                \includegraphics[width=\linewidth]{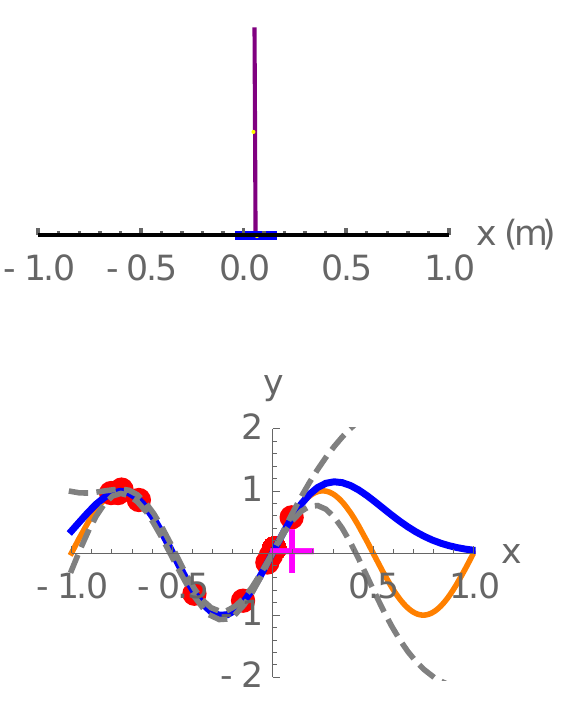}
                \caption{$t = 14$}
            \end{subfigure}
            \begin{subfigure}{0.24\textwidth}
                \includegraphics[width=\linewidth]{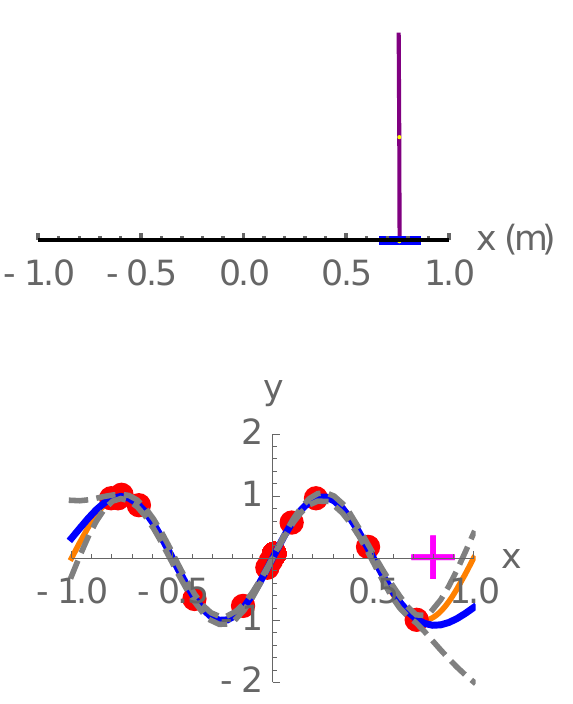}
                \caption{$t = 20$}
            \end{subfigure}
            \caption{
                    Time series snap-shots of cart double pendulum actively sampling and estimating the shape underneath.
                    The uncertainty (dashed gray lines) calculated from the collected data set drives the exploratory motion of
                    the cart double
                    pendulum while our method ensures that the cart double pendulum is maintained in its upright equilibrium
                    state.
            }
            \label{fig:cdp-time}
            \vspace{-5mm}
        \end{figure*}

        The measurements of the height of the sinusoidal shape are collected through $x$ position of the cart
        (illustrated in Fig.~\ref{fig:cdp-time} as the magenta crosshair underneath the cart).
        A Gaussian process with an radial basis function (RBF) kernel~\cite{nguyen_NEURO_incremental_sparse_gp} is then used to
        estimate the function and provide the distribution used for exploration.
        The underlying importance distribution (\ref{eq-importance-dist}) is updated as the data set is pruned to include
        new informative measurements.

        \begin{wrapfigure}{r}{0.4\textwidth}
            \vspace{0mm}
            \begin{subfigure}{0.4\textwidth}
                \includegraphics[width=\linewidth, trim={0 0 0 2mm}, clip]{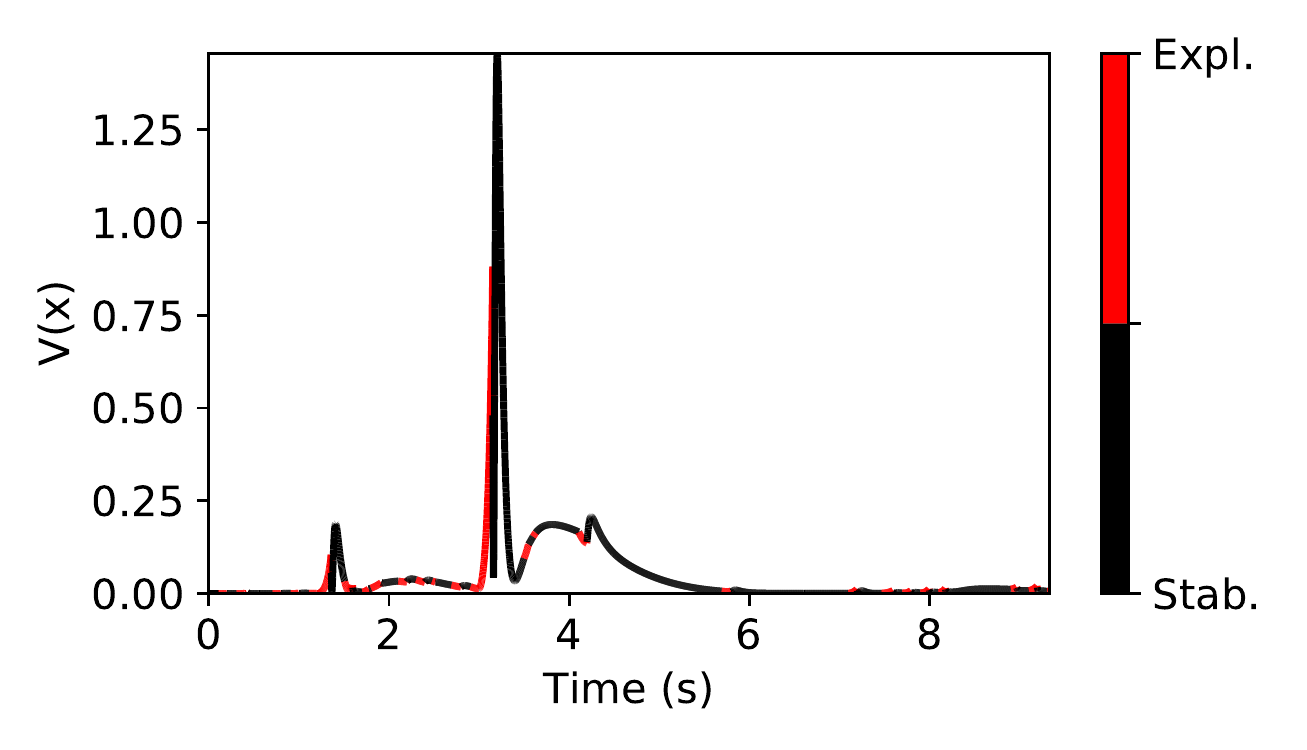}
            \end{subfigure}
            \begin{subfigure}{0.4\textwidth}
                \includegraphics[width=\linewidth]{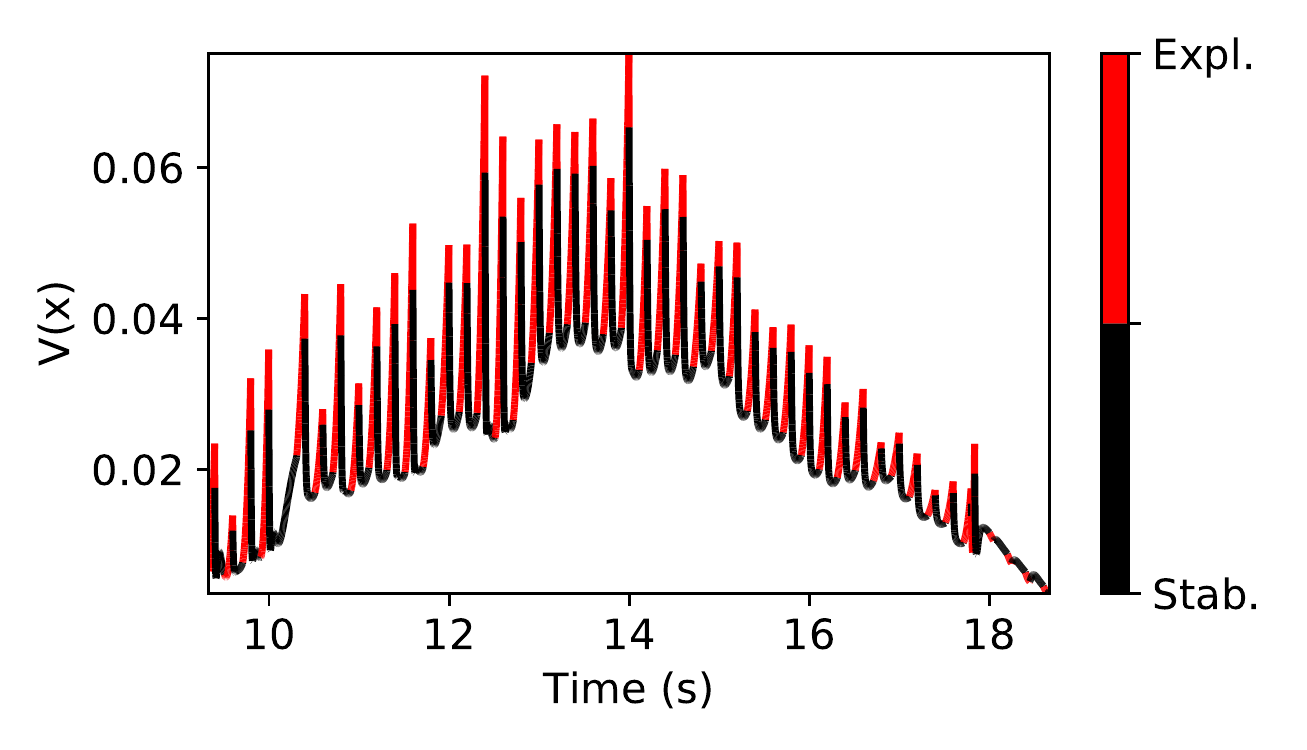}
            \end{subfigure}
            \begin{subfigure}{0.4\textwidth}
                \includegraphics[width=\linewidth]{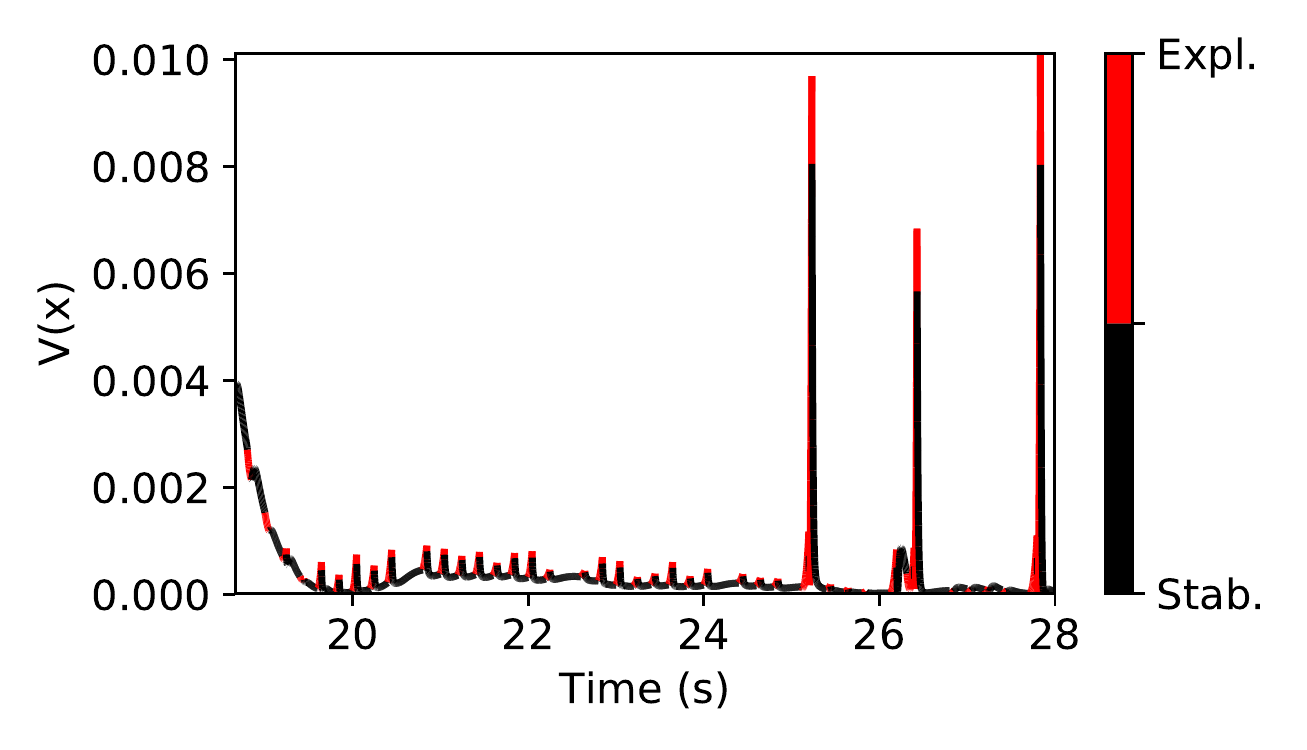}
            \end{subfigure}
            \caption{
                Lyapunov function for the cart double pendulum with upright equilibrium.
                The red line indicates when the active exploration control is applied.
                Lyapunov attractiveness property is illustrated through automatic switching of the exploration process.
            }
            \label{fig:lyap_cdp}
            \vspace{-15mm}
        \end{wrapfigure}

        As a result of Algorithm~\ref{alg-AOL}, the robot will spend time where there is
        a high probability of acquiring informative data.
        This results is the shape reconstruction shown in Fig.~\ref{fig:cdp-time} using a limited fixed set of data ($M=50$).

        We analyze our algorithm by observing a candidate Lyapunov function (energy).
        Figure~\ref{fig:lyap_cdp} depicts the value of the Lyapunov function over the time window of the cart double pendulum
        collecting data for estimating shape.
        The control vector $\mu_\star(t)$ over the application time $t \in \left[ \tau, \tau + \lambda \right]$ increases the
        overall energy in the system (due to exploration).
        Since we include a regularization term $R$ that ensures $\mu_\star$ does not deviate too far from the equilibrium
        policy $\mu(x)$, the cart double pendulum is able to stabilize itself, eventually returning to an equilibrium state and
        ensuring stability, illustrating the Lyapunov attractiveness property proven in Theorem~\ref{thm2}.

    \vspace{-2mm}
    \subsubsection*{Learning Dynamics of Quadrotor}
        Our next example illustrates active data acquisition in the state-space of a 22 degree of freedom quadrotor
        vehicle shown in Fig.~\ref{fig-env-1}.
        The results are averaged across $30$ trials with random initial conditions sampled uniformly in the body
        angular and linear velocities $\omega, v \sim \mathcal{U}\left[- 0.01, 0.01 \right]$ where $\mathcal{U}$ is a uniform distribution.

        The goal for this quadrotor is to maintain hovering height while collecting data in order to learn the
        dynamics model $f(x,u)$.
        In this example, a linear approximation of the dynamics centered at the hovering height is used as the local
        dynamics approximation on which Algorithm~\ref{alg-AOL} is based.
        We then generate a LQR controller with the approximate dynamics which we use as the equilibrium policy,
        The input data we collect is the state $x(t_i) = x_i$ and control $u(t_i) = u_i$ and the output data is
        $(x_{i+1}-x_{i})/(t_{i+1} -t_i)$ which approximates the function
        $\frac{\Delta x}{\Delta t} \approx \dot{x} = f(x,u)$~\cite{nagabandi2017neural}.
        An incremental sparse Gaussian process~\cite{nguyen_NEURO_incremental_sparse_gp} with a radial basis function
        kernel is used to generate a learned model of the dynamics using a data set of $M=80$ and to specify the
        importance measure (\ref{eq-importance-measure}).

                \begin{figure*}[th!]
                    \vspace{-5mm}
                    \centering
                    \includegraphics[width=\linewidth]{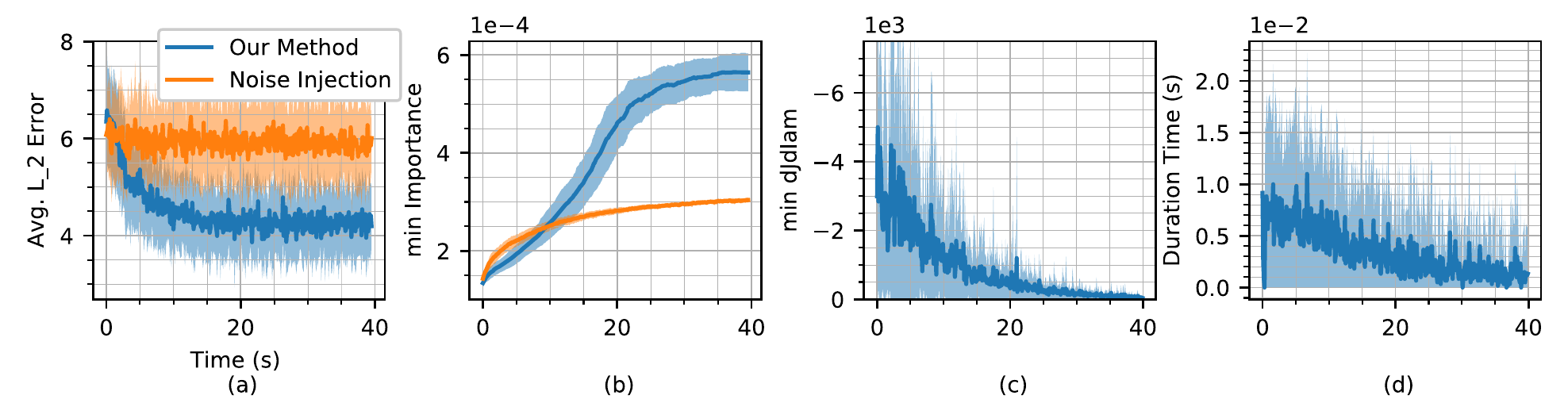}
                      \caption{
                          (a) Average $L_2$ error of the learned dynamics model evaluated on 10 uniformly distributed random
                          samples between $\left[-1,1\right]$ in the state and action space.
                          (b) Minimum importance measure from the collected data set using our method compared against injected noise.
                          (c) Minimum value of the mode insertion gradient evaluated at the chosen $\mu_\star(\tau)$ value.
                          (d) Calculated duration time of control $\mu_\star(\tau)$.  }
                     \label{fig:quad-dynamics-data}
                     \vspace{-5mm}
                \end{figure*}

        Figure~\ref{fig:quad-dynamics-data} (a) and Figure~\ref{fig:quad-dynamics-data} (b) illustrates the modeling error
        and the minimum importance value within the data set using our method and the equilibrium policy with uniformly
        added added noise at $33 \%$ of the saturation limit.
        Our method sequences and automates the process of choosing when it is best to explore and to stabilize by taking
        into account the approximate dynamics and the equilibrium policy.
        As a result, a robot is capable of acquiring informative data that improves the prediction of the nonlinear
        dynamic model of the quadrotor.
        In contrast, adding noise to the control input (often referred to as ``motor babble''~\cite{nagabandi2017neural})
        does not have temporal dependencies. That is, each new sample does not have information from the previous samples
        and cannot effectively explore the state-space.

        As the robot continues to explore, the value of the mode insertion gradient (\ref{eq:mode-insertion}) decreases as
        does the duration time $\lambda$ as shown in Fig.~\ref{fig:quad-dynamics-data} (c) and (d).
        This implies that the robot is sufficiently reducing the objective for area coverage and the equilibrium policy
         begins to take over to stabilize the robot.
        This is a result of taking into account the local stability of the robotic system while generating exploratory actions.

    \vspace{-5mm}
    \subsubsection*{Learning to Gallop}

        \begin{wrapfigure}{r}{0.6\textwidth}
            \vspace{-15mm}
            \centering
            \includegraphics[width=1\linewidth]{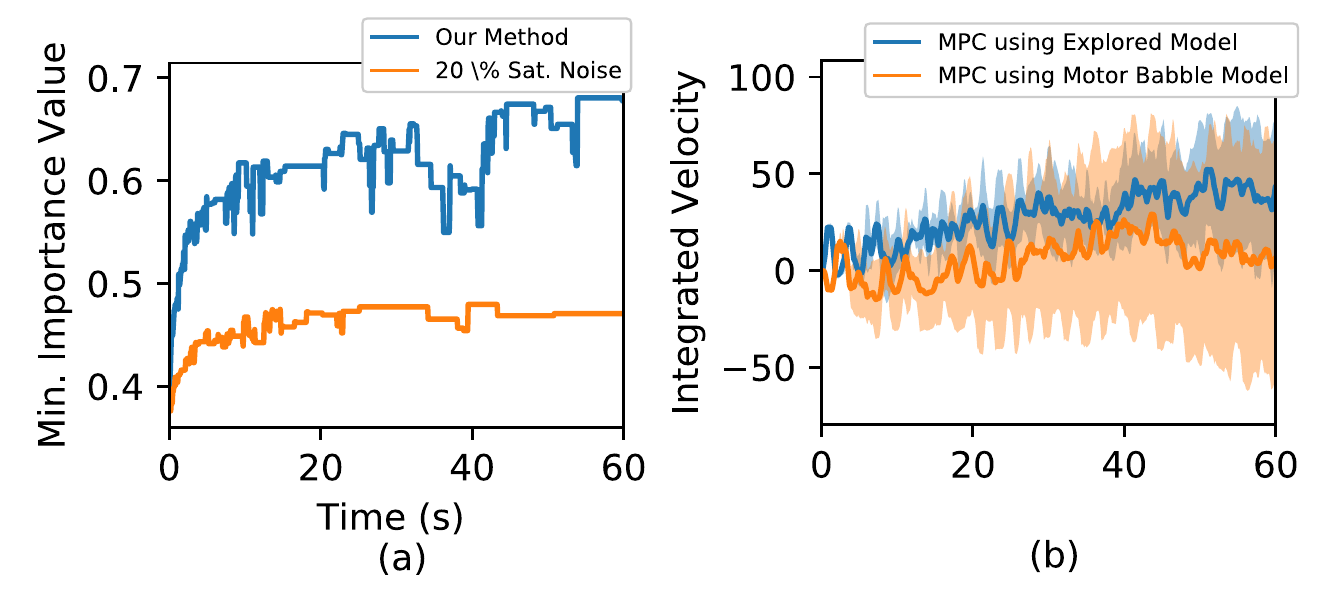}
            \caption{
                    \footnotesize
                    (a) Comparison of the minimum importance measure of the the data set for the half-cheetah example from a stable
                    standing policy with added $20 \%$ saturation limit noise and our approach for active data acquisition.
                    (b) Integrated forward velocity values from using the learned half-cheetah dynamics model in a model-predictive
                    control setting with standard deviation illustrated for 5 trials.
                    Our method is shown to collect data which provides a better dynamic model and a positive net forward velocity with
                    reduced variance.
                }
            \label{fig-cheetah-data}
            \vspace{-8mm}
        \end{wrapfigure}

        In this last example, we consider applications of Algorithm~\ref{alg-AOL} for systems with dynamic models and policies
        that are learned.
        We use the half-cheetah from the roboschool environment~\cite{klimov2017roboschool} for the task of learning a dynamics
        model in order to control the robot to gallop forward.

        We first learn a simple standing upright policy using the augmented random search (ARS) method~\cite{mania2018simple}.
        In that process, we collect the state and action data to compute a linear approximation using least-squares for
        the local dynamics.
        Then Algorithm~\ref{alg-AOL} is applied using an incremental sparse Gaussian process using an RBF kernel to generate a
        dynamics model from data as well as provide the importance measure using a set of $M=40$ data points.
        The input-output data structure maps input $(x(t_i), u(t_i))$ to the change in state $\frac{\Delta x}{\Delta t}$.
        Our running cost $\ell(x,u)$ is set to maintain the half-cheetah upright.
        After the Gaussian process model is learned, we use the generated model in the prediction of the forward dynamics
        as a replacement for the initial dynamics model.

        As shown in Fig.~\ref{fig-cheetah-data}, our method collects informative data while respecting the standing upright
        policy when compared to noisy inputs.
        We compared the two learned models using our controller with $D_\text{KL}=0$ and the running cost $\ell(x,u)$ set to
        maximize the forward velocity of the half-cheetah.
        We show those results in Fig.~\ref{fig-cheetah-data} over 5 runs of our algorithm at different initial states.
        Our method provides a learned model that has overall positive integrated velocity (forward movement).
        While our method is more complex than simply adding noise, it provides stability guarantees based on known policies
        in order to explore and collect data.

        \vspace{-5mm}

\section{Conclusion} \label{sec-conclusion}
    \vspace{-2mm}

    Algorithm~\ref{alg-AOL} enables robots to actively seek out informative data based on the learning task while maintaining
    stability using equilibrium policies.
    Our method generates area coverage using a KL-divergence measure in order to enable robots to actively seek out
    informative data.
    Moreover, by using a hybrid systems theory approach to generating area coverage, we were able to incorporate equilibrium
    policies in order to provide stability guarantees even with the model of the robot dynamics only locally known.
    Last, we provide examples that illustrate the benefits of our approach for active data acquisition for learning tasks.


\vspace{-4mm}
{\smaller
\bibliography{references}}

\begin{thebibliography}{25}
\providecommand{\natexlab}[1]{#1}
\providecommand{\url}[1]{\texttt{#1}}
\expandafter\ifx\csname urlstyle\endcsname\relax
  \providecommand{\doi}[1]{doi: #1}\else
  \providecommand{\doi}{doi: \begingroup \urlstyle{rm}\Url}\fi

\bibitem[Kormushev et~al.(2010)Kormushev, Calinon, and
  Caldwell]{kormushev_robotmotorskills_em_rl}
Petar Kormushev, Sylvain Calinon, and Darwin~G Caldwell.
\newblock Robot motor skill coordination with em-based reinforcement learning.
\newblock In \emph{International Conference on Intelligent Robots and Systems},
  pages 3232--3237, 2010.

\bibitem[Reinhart(2017)]{reinhart_AuRo_skill_babble}
Ren{\'e}~Felix Reinhart.
\newblock Autonomous exploration of motor skills by skill babbling.
\newblock \emph{Autonomous Robots}, 41\penalty0 (7):\penalty0 1521--1537, 2017.

\bibitem[McKinnon and Schoellig(2017)]{mckinnon_multimodal_gp_learning_online}
Christopher~D McKinnon and Angela~P Schoellig.
\newblock Learning multimodal models for robot dynamics online with a mixture
  of {G}aussian process experts.
\newblock In \emph{International Conference on Robotics and Automation}, pages
  322--328, 2017.

\bibitem[Tan et~al.(2018)Tan, Zhang, Coumans, Iscen, Bai, Hafner, Bohez, and
  Vanhoucke]{tan_RSS_sim_to_real}
Jie Tan, Tingnan Zhang, Erwin Coumans, Atil Iscen, Yunfei Bai, Danijar Hafner,
  Steven Bohez, and Vincent Vanhoucke.
\newblock Sim-to-real: Learning agile locomotion for quadruped robots.
\newblock In \emph{Proceedings of Robotics: Science and Systems}, 2018.
\newblock \doi{10.15607/RSS.2018.XIV.010}.

\bibitem[Marco et~al.(2017)Marco, Berkenkamp, Hennig, Schoellig, Krause,
  Schaal, and Trimpe]{marco2017virtual}
Alonso Marco, Felix Berkenkamp, Philipp Hennig, Angela~P Schoellig, Andreas
  Krause, Stefan Schaal, and Sebastian Trimpe.
\newblock Virtual vs. real: Trading off simulations and physical experiments in
  reinforcement learning with bayesian optimization.
\newblock In \emph{International Conference on Robotics and Automation}, pages
  1557--1563, 2017.

\bibitem[Schwager et~al.(2017)Schwager, Dames, Rus, and
  Kumar]{schwager_robotics_inf_gather}
Mac Schwager, Philip Dames, Daniela Rus, and Vijay Kumar.
\newblock A multi-robot control policy for information gathering in the
  presence of unknown hazards.
\newblock In \emph{Robotics research}, pages 455--472. 2017.

\bibitem[Nguyen-Tuong and Peters(2011)]{nguyen_NEURO_incremental_sparse_gp}
Duy Nguyen-Tuong and Jan Peters.
\newblock Incremental online sparsification for model learning in real-time
  robot control.
\newblock \emph{Neurocomputing}, 74\penalty0 (11):\penalty0 1859--1867, 2011.

\bibitem[Ucinski(2004)]{ucinski_CRC_optimal_meas}
Dariusz Ucinski.
\newblock \emph{Optimal measurement methods for distributed parameter system
  identification}.
\newblock CRC Press, 2004.

\bibitem[Berkenkamp et~al.(2017)Berkenkamp, Turchetta, Schoellig, and
  Krause]{berkenkamp2017safe}
Felix Berkenkamp, Matteo Turchetta, Angela Schoellig, and Andreas Krause.
\newblock Safe model-based reinforcement learning with stability guarantees.
\newblock In \emph{Advances in Neural Information Processing Systems}, pages
  908--918, 2017.

\bibitem[Lin and Liu(2017)]{lin2017direct}
Tsen-Chang Lin and Yen-Chen Liu.
\newblock Direct learning coverage control based on expectation maximization in
  wireless sensor and robot network.
\newblock In \emph{Conference on Control Technology and Applications}, pages
  1784--1790, 2017.

\bibitem[Bourgault et~al.(2002)Bourgault, Makarenko, Williams, Grocholsky, and
  Durrant-Whyte]{bourgault2002information}
Frederic Bourgault, Alexei~A Makarenko, Stefan~B Williams, Ben Grocholsky, and
  Hugh~F Durrant-Whyte.
\newblock Information based adaptive robotic exploration.
\newblock In \emph{International Conference on Intelligent Robots and Systems},
  volume~1, pages 540--545, 2002.

\bibitem[Miller et~al.(2016)Miller, Silverman, MacIver, and
  Murphey]{miller2016ergodic}
Lauren~M Miller, Yonatan Silverman, Malcolm~A MacIver, and Todd~D Murphey.
\newblock Ergodic exploration of distributed information.
\newblock \emph{IEEE Transactions on Robotics}, 32\penalty0 (1):\penalty0
  36--52, 2016.

\bibitem[Ayvali et~al.(2017)Ayvali, Salman, and Choset]{ayvali2017ergodic}
Elif Ayvali, Hadi Salman, and Howie Choset.
\newblock Ergodic coverage in constrained environments using stochastic
  trajectory optimization.
\newblock In \emph{International Conference on Intelligent Robots and Systems},
  pages 5204--5210, 2017.

\bibitem[Arnold and Wellerding(1992)]{arnold1992sobolev}
Randolf Arnold and Andreas Wellerding.
\newblock On the sobolev distance of convex bodies.
\newblock \emph{aequationes mathematicae}, 44\penalty0 (1):\penalty0 72--83,
  1992.

\bibitem[Axelsson et~al.(2008)Axelsson, Wardi, Egerstedt, and
  Verriest]{axelsson_JOTA_modeinsertion}
Henrik Axelsson, Y~Wardi, Magnus Egerstedt, and EI~Verriest.
\newblock Gradient descent approach to optimal mode scheduling in hybrid
  dynamical systems.
\newblock \emph{Journal of Optimization Theory and Applications}, 136\penalty0
  (2):\penalty0 167--186, 2008.

\bibitem[Mavrommati et~al.(2018)Mavrommati, Tzorakoleftherakis, Abraham, and
  Murphey]{mavrommati2018real}
Anastasia Mavrommati, Emmanouil Tzorakoleftherakis, Ian Abraham, and Todd~D
  Murphey.
\newblock Real-time area coverage and target localization using
  receding-horizon ergodic exploration.
\newblock \emph{IEEE Transactions on Robotics}, 34\penalty0 (1):\penalty0
  62--80, 2018.

\bibitem[Abraham and D.~Murphey(2018)]{abrahamDecentralized2018}
Ian Abraham and Todd D.~Murphey.
\newblock Decentralized ergodic control: Distribution-driven sensing and
  exploration for multiagent systems.
\newblock \emph{IEEE Robotics and Automation Letters}, 3\penalty0 (4):\penalty0
  2987--2994, 2018.

\bibitem[Polyakov and Fridman(2014)]{polyakov2014stability}
Andrey Polyakov and Leonid Fridman.
\newblock Stability notions and lyapunov functions for sliding mode control
  systems.
\newblock \emph{Journal of the Franklin Institute}, 351\penalty0 (4):\penalty0
  1831--1865, 2014.

\bibitem[Scholkopf et~al.(1999)Scholkopf, Mika, Burges, Knirsch, Muller,
  Ratsch, and Smola]{scholkopf1999input}
Bernhard Scholkopf, Sebastian Mika, Chris~JC Burges, Philipp Knirsch, K-R
  Muller, Gunnar Ratsch, and Alex~J Smola.
\newblock Input space versus feature space in kernel-based methods.
\newblock \emph{IEEE Transactions on Neural Networks}, 10\penalty0
  (5):\penalty0 1000--1017, 1999.

\bibitem[Yan et~al.(2017)Yan, Indelman, and Boots]{yan_incremental_sparse_gp}
Xinyan Yan, Vadim Indelman, and Byron Boots.
\newblock Incremental sparse gp regression for continuous-time trajectory
  estimation and mapping.
\newblock \emph{Robotics and Autonomous Systems}, 87:\penalty0 120--132, 2017.

\bibitem[Emery and Nenarokomov(1998)]{emery_optimal_exp_design}
AF~Emery and Aleksey~V Nenarokomov.
\newblock Optimal experiment design.
\newblock \emph{Measurement Science and Technology}, 9\penalty0 (6):\penalty0
  864, 1998.

\bibitem[Fan and Murphey(2016)]{Fan-RSS-16}
Taosha Fan and Todd Murphey.
\newblock Online feedback control for input-saturated robotic systems on {L}ie
  groups.
\newblock In \emph{Proceedings of Robotics: Science and Systems}, June 2016.
\newblock \doi{10.15607/RSS.2016.XII.027}.

\bibitem[Klimov and Shulman(2017)]{klimov2017roboschool}
Oleg Klimov and John Shulman.
\newblock Roboschool.
\newblock \url{https://github.com/openai/roboschool}, 2017.

\bibitem[Nagabandi et~al.(2017)Nagabandi, Kahn, Fearing, and
  Levine]{nagabandi2017neural}
Anusha Nagabandi, Gregory Kahn, Ronald~S Fearing, and Sergey Levine.
\newblock Neural network dynamics for model-based deep reinforcement learning
  with model-free fine-tuning.
\newblock \emph{arXiv preprint arXiv:1708.02596}, 2017.

\bibitem[Mania et~al.(2018)Mania, Guy, and Recht]{mania2018simple}
Horia Mania, Aurelia Guy, and Benjamin Recht.
\newblock Simple random search provides a competitive approach to reinforcement
  learning.
\newblock \emph{arXiv preprint arXiv:1803.07055}, 2018.

\end{thebibliography}

\end{document}